\title{Generalization error bounds for iterative learning algorithms with bounded updates}
\author{
Jingwen Fu \\
Xi’an Jiaotong University \\
\texttt{jwfu99@gmail.com}
\And
Nanning Zheng \\
Xi'an Jiaotong University \\
\texttt{nnzheng@mail.xjtu.edu.cn}
}
\begin{document}

\theoremstyle{plain}
\newtheorem{theorem}{Theorem}[section]
\newtheorem{proposition}[theorem]{Proposition}
\newtheorem{lemma}[theorem]{Lemma}
\newtheorem{corollary}[theorem]{Corollary}
\theoremstyle{definition}
\newtheorem{definition}[theorem]{Definition}
\newtheorem{assumption}[theorem]{Assumption}
\theoremstyle{remark}
\newtheorem{remark}[theorem]{Remark}
\hypersetup{
     colorlinks=true,
     linkcolor=black,
     filecolor=blue,
     citecolor = blue,      
     urlcolor=cyan,
     }

\newcommand{\ut}{U^{(T)}}
\newcommand{\lV}{\left\Vert}
\newcommand{\rV}{\right\Vert}
\newcommand{\lv}{\left\vert}
\newcommand{\rv}{\right\vert}
\newcommand{\td}{\ed}
\newcommand{\trans}{^{\mathrm{T}}}
\newcommand{\ed}{d_{\operatorname{ef}}}
\newcommand{\nd}{d_{\operatorname{null}}}
\newcommand{\eD}{D_{\operatorname{ef}}}
\newcommand{\nD}{D_{\operatorname{null}}}
\newcommand{\Tr}{\operatorname{Tr}}
\newcommand{\bV}{\mathbb{V}}
\newcommand{\bC}{\mathbb{C}}
\newcommand{\bE}{\mathbb{E}}
\newcommand{\rmd}{\mathrm{d}}
\newcommand{\gen}{gen(\mu,\mathbb{P}_{W|S_n})}
\newcommand{\genhp}{ \mathbb{E}_{w \sim \mathbb{P}_2} (F_{\mu}(w)) - \mathbb{E}_{w \sim \mathbb{P}_2}(F_{S_n}(w))}
\newcommand{\bv}{\boldsymbol{v}}
\newcommand{\bom}{\boldsymbol{m}}
\newcommand{\bw}{\boldsymbol{w}}
\newcommand{\bbeta}{\boldsymbol{\beta}}
\newcommand{\bone}{\boldsymbol{\beta}_1}
\newcommand{\btwo}{\boldsymbol{\beta}_2}
\newcommand{\bg}{\boldsymbol{g}}
\newcommand{\Es}{\mathbb{E}}
\maketitle

\begin{abstract}
This paper explores the generalization characteristics of iterative learning algorithms with bounded updates for non-convex loss functions, employing information-theoretic techniques. Our key contribution is a novel bound for the generalization error of these algorithms with bounded updates.
Our approach introduces two main novelties: 1) we reformulate the mutual information as the uncertainty of updates, providing a new perspective, and 2) instead of using the chaining rule of mutual information, we employ a variance decomposition technique to decompose information across iterations, allowing for a simpler surrogate process.
We analyze our generalization bound under various settings and demonstrate improved bounds. To bridge the gap between theory and practice, we also examine the previously observed scaling behavior in large language models. Ultimately, our work takes a further step for developing practical generalization theories.

\end{abstract}

\section{Introduction}

The majority of machine learning techniques utilize the empirical risk minimization framework. Within this framework, the optimization objective is to minimize empirical risk, which is the average risk over a finite set of training samples. In practice, the value of interest is the population risk, representing the expected risk across a population. Generalization error is the difference between the optimization objective (empirical risk) and the value of interest (population risk). The prevalence of machine learning techniques makes it essential to comprehend generalization error.

Previous studies \citep{russo2016controlling,russo2019much,xu2017information} have established a relationship between mutual information, $I(W;S_n)$, and the generalization error, where $S_n$ is a set containing $n$ samples from a distribution $\mu$, erving as the algorithm's input, and $W$ represents the model's weights after training, serving as the algorithm's output. Information-theoretic tools are well-suited for analyzing iterative learning algorithms, as the chain rule of mutual information allows for a simple decomposition $I(W,S_n)$ across iterations (i.e. $I(W_T;S_n) \leq I(W_1,\cdots W_T; S_n) \leq \sum_{t=1}^T I(W_t; S_n|W_{t-1})$).
Leveraging this technique, \citet{xu2017information} studies the generalization properties of stochastic gradient Langevin dynamics (SGLD). SGLD can be considered as introducing noise to the SGD in each update step.

Since most commonly used algorithms in practice, such as SGD and Adam \citep{kingma2014adam}, do not incorporate noise injection during the update process, recent research efforts are focused on integrating information-theoretic methods into these iterative algorithms without added noise. The challenge in this approach is that the value of $I(W_t; S_n|W_{t-1})$ will become infinite when $W_T$ is determined by $S_n$ and $W_{t-1}$. A potential solution involves utilizing surrogate processes \citep{negrea2020defense,sefidgaran2022rate}. \citet{neu2021information} derives generalization bounds for SGD by employing a "virtual SGLD" surrogate process, in which noise is introduced during each update step of (S)GD. Their generalization bound consists of two components: the generalization bound for the surrogate process and the bound for the difference between the generalization errors of the surrogate and original processes.

This paper examines the mutual information $I(S_n, W)$ from alternative perspectives and reformulates the mutual information to relate to the uncertainty of the update. The uncertainty of the update refers to how the update will vary for different datasets $S_n \sim \mu^{\otimes n}$. Instead of applying the chaining rule of mutual information, we use a variance decomposition method to decompose information across iterations. From this perspective, we establish the generalization bound for general iterative algorithms with bounded updates by employing a surrogate process that adds noise exclusively to the original process's final update.

We analyze our generalization bound in different situation. Our work achieve better vanishing rate guarantee than previous work \citet{neu2021information}.  We also investigate the gap between our theoretical framework and practical applications by analyzing the previous discovery of the scaling behavior in large language models. Our model shed light on developing practically useful generalization theories.

The contributions of our work can be summarized as following:
\begin{itemize}
    \item This paper offers a novel viewpoint for analyzing the mutual information $I(W, S_n)$ by focusing on the uncertainty of updates.  
    \item A new generalization bound, derived from an information-theoretic approach, is presented. This bound is applicable to iterative learning algorithms with bounded updates.
    \item We investigate the generalization behavior of various types of bounded update, iterative learning algorithms. Additionally, we summary the scaling rules of large language models from previous experimental findings to examine the gap between theoretical and practical aspects.
\end{itemize}


\section{Related works}

Existing works on generalization theory can be roughly divided into two categories: function space based method, and the learning algorithm based method.
The function space based method study the generalization behavior based on the complexity of function space. Many methods for measuring the complexity of the function space have been proposed, e.g., VC dimension  \citep{vapnik2015uniform}, Rademacher Complexity \citep{bartlett2002rademacher} and covering number \citep{shalev2014understanding}. These works fail in being applied to overparameters models, where the number of parameters is larger than the number of data samples. Because the function space is too large to deliver a trivial result \citep{zhang2021understanding} in this case. To overcome this problem, recent works want to leverage the properties of learning algorithm to analyzing the generalization behavior. The most popular methods are stability of algorithm  \citep{hardt2016train} and information-theoretic analysis  \citep{xu2017information,russo2016controlling}. Among them, the stability of algorithm \citep{bousquet2002stability} measures how one sample change of training data impacts the model weights finally learned, and the information theory  \citep{russo2016controlling,russo2019much,xu2017information} based generalization bounds rely on the mutual information of the input (training data) and output (weights after training) of the learning algorithm. Although the both the stability method and information theoretic method are general, obtaining the generalization bound for practical learning algorithms is non-trival. Most of the stability-based generalization bounds focus on SGD \citep{hardt2016train,bassily2020stability,nikolakakis2022beyond}. Applying the stability-based method outside SGD is very complex and non-trival \citep{nguyen2022algorithmic,ramezani2018generalization}. Most information-theoretic generalization bounds are applied for Stochastic Gradient Langevin Dynamics(SGLD), e.g., SGD with noise injected in each step of parameters updating  \citep{pensia2018generalization,negrea2019information,haghifam2020sharpened,negrea2019information,haghifam2020sharpened}. \citet{neu2021information} extends the information-theoretic generalization bounds to SGD by leveraging surrogate process. \textbf{Our work advances the field by extending the information-theoretic based method to learning algorithms beyond SGD in a simple way.} This represents a significant step towards developing practically useful generalization theories.

\section{Preliminary}

Let $P,Q$ be probability measures on a measurable space. When $Q \ll P$,  meaning $Q$ is absolutely continuous with respect to $P$, $\frac{\rmd Q}{\rmd P}$ represents the Radon-Nikodym derivative of $Q$ concerning $P$. The relative entropy (KL divergence) is calculated as $\operatorname{KL}(P\Vert Q)=\int_{x} \rmd P(x)\log\left( \frac{\rmd P}{\rmd Q}(x)\right)$. The distribution of variable $x$ is denoted as $\mathbb{P}(x)$ or $\mathbb{P}_x$. The product distribution between two variables $x,y$ is denoted as $\mathbb{P}(x) \otimes \mathbb{P}(y)$. The mutual information between two variables $x,y$ is calculated as $I(x;y)=\operatorname{KL}(\mathbb{P}(x,y)\Vert \mathbb{P}(x) \otimes \mathbb{P}(y))$. We use $\Vert \cdot \Vert$ to denote the Euclidean norm. And we denote $\{1,\cdots,k \}$ as $[k]$.

We consider the data distribution $\mu$. The data $Z$ is sampled from $\mu$ and resides in the space $\mathcal{Z}$. The training dataset is represented as $S_n \sim \mu^{\otimes n}$. The learning algorithms is denoted as $\mathcal{A}$ which takes $S_n$ as input and outputs weights for parameterized.  The weights are denoted as $W \in \mathbb{W}$, with a dimension of $d$. The performance and behavior of these weights are evaluated using a loss function, represented as $f(W,Z) \in \mathbb{R}_{+}$. We assume $f(W,Z)$ is differentiable with respect to $W$. The gradient and the Hessian matrix of $f(W,Z)$ are denoted as $\nabla f(W,Z)$ and $\nabla^2 f(W,Z)$ respectively. the value of interest is population risk, which is calculated as
\begin{equation*}
    F_{\mu}(W)=\mathbb{E}_{z \sim \mu} f(W,z).
\end{equation*}

 However, the population risk is often inaccessible. In the context of empirical risk minimization (ERM), the objective is to minimize the empirical risk.
 Given a data set $S_n=\lbrace z_i \rbrace_{i=1}^n \sim \mu^{\otimes n}$, the empirical risk is calculated as 
\begin{equation*}
    F_{S_n}(W)=\frac{1}{n}\sum_{i=1}^n f(W,z_i).
\end{equation*}
The empirical risk is determined by averaging all samples in a dataset $S_n$. This paper primarily focuses on the generalization error, which represents the difference between empirical risk and population risk. The generalization error can be calculated as follows
\begin{equation*}
    \gen=\mathbb{E}  _ {S_n\sim \mu^{\otimes n},W\sim \mathbb{P}_{W|S_n}} \left[ F_{S_n}(W)-F_{\mu}(W) \right].
\end{equation*}
The generalization error is calculated as the expectation concerning the randomness of data and the algorithm. In the learning problem, we iteratively update the weights of parameterized functions. We represent the weights at step $t$ as $W_t$. $W_t$ is acquired by adding the update value to the initial weights $W_0$, i.e., $W_t = W_{t-1} + U_t$. Typically, $U_t$ takes the form $U_t = \eta_t u_t$, where $\eta_t$ indicates the learning rate for the $t$-th step. We denote the accumulated update as $U^{(t)}\triangleq \sum_{t=1}^t U_t$. The initial weights are obtained by sampling from a specific distribution, i.e. $W_0 \sim \mathbb{P}(W_0)$. The final output of the $T$-steps algorithm is $W_T$. The variance of update is defined as:
\begin{equation*}
    \bV_{\mu,n}(U^{(t)}|W_0)\triangleq \mathbb{E}_{W_0\sim \mathbb{P}_{W_0}}\Es \left[ \lV U^{(t)} - \Es U^{(t)} \rV^2|W_0\right],
\end{equation*}
where the $\Es U^{(t)}$ is taking the expection of all randomness of $U^{(t)}$, including the randomness caused by data sampling and the randomness of learning algorithm.
Following the similar way, we define the covariance as 
\begin{equation*}
    \mathbb{C}_{\mu,n}(U_i,U_j|W_0)\triangleq \mathbb{E}_{W_0\sim \mathbb{P}_{W_0}} \Es \left[ < \Bar{U}_i,\Bar{U}_j >|W_0\right],
\end{equation*}
where $\Bar{U}_i=U_i - \Es U_i$. Without loss of ambiguity, we simplify $\bV_{\mu,n}(U^{(t)}|W_0)$ as $\bV(U^{(t)})$ and $\mathbb{C}_{\mu,n}(U_i,U_j|W_0)$ as $\mathbb{C}(U_i,U_j)$.
\section{Generalization bound}
Our primary result is a bound on the generalization error of the weights $W$ generated by a learning algorithm with bounded updates. We will initially analyze the generalization mutual information from the perspective of update uncertainty. Subsequently, we will provide a bound for the learning algorithm with bounded updates.



\subsection{Generalization bounds with uncertainty of update}
We begin by discussing the assumption used in our bound. The $R$-sub-Gaussian is defined as follows:
\begin{definition}
    A random variable $X$ is $R$-sub-Gaussian if for every $\lambda \in \mathbb{R}$, the following inequality holds:
    \begin{equation*}
        \mathbb{E}[\exp{\left(\lambda(X-\mathbb{E}X)\right)}]\leq \exp{\left(\frac{\lambda^2R^2}{2}\right)}
    \end{equation*}
\end{definition}
\begin{remark}
    If a variable $X\in \mathbb{R}$ and takes value in $[a,b]$, then the variable is $(b-a)/2$-sub-Guassian.
\end{remark}
Based on the definition of $R$-sub-Guassian, our assumption is:
\begin{assumption}
    \label{as:subguassian}
    Suppose $f(w,Z)$ is $R$-sub-Guassian with respect to $Z \sim \mu$ for every $w \in \mathcal{W}$.
\end{assumption}
With the $R$-sub-Guassian, we obtain the following generalization bound,
\begin{theorem}
    \label{thm:information_bound_entropy}
    Under Assumption \ref{as:subguassian}, the following bound holds:
    \begin{equation}
        \lv \gen \rv \leq \sqrt{\frac{2R^2}{n} [h(\ut|W_0)-h(\ut|W_0,S_n)]}.
    \end{equation}
\end{theorem}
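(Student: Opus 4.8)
The plan is to combine the classical information-theoretic generalization bound with a reformulation of the mutual information $I(W_T;S_n)$ through the differential entropy of the accumulated update $\ut$, using only the identity $W_T = W_0 + \ut$.

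First I would recall the proof of the classical estimate of \citet{xu2017information,russo2016controlling}. Fixing $w$, Assumption~\ref{as:subguassian} makes $F_\mu(w)-F_{S_n}(w)=\frac1n\sum_{i=1}^n\big(\Es_{z\sim\mu}f(w,z)-f(w,z_i)\big)$ a mean-zero, $\frac{R}{\sqrt n}$-sub-Gaussian random variable under $S_n\sim\mu^{\otimes n}$, being an average of $n$ independent $R$-sub-Gaussian terms. Introducing an independent copy $(\bar W,\bar S_n)\sim\mathbb{P}_{W}\otimes\mu^{\otimes n}$ of the two marginals, one has $\Es[F_\mu(\bar W)-F_{\bar S_n}(\bar W)]=0$, so $\gen$ is precisely the gap between the expectation of $F_\mu(W)-F_{S_n}(W)$ under the joint law $\mathbb{P}_{W,S_n}$ and under the product $\mathbb{P}_W\otimes\mu^{\otimes n}$. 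I would control this gap by plugging the test function $(w,s_n)\mapsto\lambda\big(F_\mu(w)-F_{s_n}(w)\big)$ into the Donsker--Varadhan variational representation of $\operatorname{KL}\big(\mathbb{P}_{W,S_n}\Vert\mathbb{P}_W\otimes\mu^{\otimes n}\big)=I(W;S_n)$, bounding the log-moment-generating-function term by $\lambda^2 R^2/(2n)$ via sub-Gaussianity, and optimizing over $\lambda\in\mathbb{R}$ (both signs, to recover the absolute value). This yields $\lv\gen\rv\le\sqrt{2R^2 I(W;S_n)/n}$ with $W=W_T$.

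It then remains to rewrite $I(W_T;S_n)$. Since the initialization $W_0$ is sampled independently of the data, $I(W_0;S_n)=0$, so the chain rule $I(W_0,W_T;S_n)=I(W_0;S_n)+I(W_T;S_n\mid W_0)=I(W_T;S_n)+I(W_0;S_n\mid W_T)$ gives $I(W_T;S_n)\le I(W_T;S_n\mid W_0)$. Expanding this conditional mutual information as a difference of conditional differential entropies, $I(W_T;S_n\mid W_0)=h(W_T\mid W_0)-h(W_T\mid W_0,S_n)$, and using $W_T=W_0+\ut$ together with translation invariance of differential entropy — conditionally on $W_0$ (respectively on $(W_0,S_n)$) the law of $W_T$ is a deterministic shift of that of $\ut$ — we get $h(W_T\mid W_0)=h(\ut\mid W_0)$ and $h(W_T\mid W_0,S_n)=h(\ut\mid W_0,S_n)$. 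Substituting into the first-step inequality proves the theorem.

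I expect the main obstacle to lie in the first step: making the decoupling estimate rigorous requires checking measurability and integrability of the test function, the sub-Gaussian moment bound uniformly in $w$, and the two-sided $\lambda$-optimization; and in the second step one must ensure the differential-entropy identity is meaningful, i.e. that the relevant conditional laws of $\ut$ admit densities so that $h(\ut\mid W_0)-h(\ut\mid W_0,S_n)$ is well defined and finite — this is exactly the degeneracy issue flagged in the introduction, and the reason the later analysis passes to a noise-perturbed surrogate of $\ut$. The remaining manipulations (chain rule, translation invariance) are routine.
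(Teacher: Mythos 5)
Your proposal is correct and follows essentially the same route as the paper: invoke the Xu--Raginsky bound $\lv \gen \rv \le \sqrt{2R^2 I(W_T;S_n)/n}$, then rewrite the mutual information via conditioning on $W_0$ (using $I(W_0;S_n)=0$) and translation invariance of differential entropy to get $I(W_T;S_n)\le h(\ut|W_0)-h(\ut|W_0,S_n)$. Your derivation through the chain rule $I(W_T;S_n)\le I(W_T;S_n|W_0)=h(W_T|W_0)-h(W_T|W_0,S_n)$ is in fact a cleaner version of the paper's entropy-chain-rule manipulation, which asserts $h(W_0|W_T)=h(W_0|W_T,S_n)$ where only the inequality $\ge$ holds in general — an inequality that, as in your argument, is all the theorem requires.
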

This bound transfer the original $I(W;S_n)$ into the difference between two update entropy. The update entropy can be interprete as of measure the uncertainty. $h(\ut|W_0)-h(\ut|W_0,S_n)$ measures the contribution  dataset $S_n$ to the update uncertainty. A low generalization bound can be obtained if the learning algorithm takes a similar update given different $S_n \sim \mu^{\otimes n}$. 

We first consider the situation where $h(\ut|W_0,S_n) \geq 0$. In this case, we can simply omit $h(\ut|W_0,S_n)$ and we only need to derive a upper bound of $h(\ut|W_0)$.
\begin{theorem}
    \label{thm:hign_randomness}
    Under Assumption \ref{as:subguassian}, for high randomness learning algorithm, i.e. $h(\ut|W_0,S_n) \geq 0$, the generalization error of the final iteration satisfies
    \begin{equation*}
        \lv gen(\mu,\mathbb{P}_{W|S_n}) \rv \leq  \sqrt{\frac{2 \pi e R^2 \bV(\ut)}{n}}.
    \end{equation*}
\end{theorem}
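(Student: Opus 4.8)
The goal is to bound $h(\ut \mid W_0)$ from above and then substitute into Theorem~\ref{thm:information_bound_entropy}, dropping the nonnegative term $h(\ut \mid W_0, S_n)$. The natural tool is the \emph{maximum-entropy principle}: among all random vectors in $\mathbb{R}^d$ with a prescribed covariance, the Gaussian maximizes differential entropy. So I would first recall that for any $\mathbb{R}^d$-valued random vector $X$ with covariance matrix $\Sigma$, one has $h(X) \le \frac{1}{2}\log\bigl((2\pi e)^d \det \Sigma\bigr)$, and more generally this holds conditionally: $h(\ut \mid W_0) \le \mathbb{E}_{W_0}\bigl[\frac{1}{2}\log((2\pi e)^d \det \Sigma_{W_0})\bigr]$ where $\Sigma_{W_0}$ is the conditional covariance of $\ut$ given $W_0$.

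\medskip
\noindent
The next step is to pass from $\det \Sigma_{W_0}$ to the trace, i.e.\ to $\bV(\ut)$. By the AM--GM inequality applied to the eigenvalues $\lambda_1,\dots,\lambda_d$ of $\Sigma_{W_0}$,
\begin{equation*}
  \det \Sigma_{W_0} = \prod_{i=1}^d \lambda_i \le \left(\frac{1}{d}\sum_{i=1}^d \lambda_i\right)^d = \left(\frac{\Tr \Sigma_{W_0}}{d}\right)^d,
\end{equation*}
so that $h(\ut \mid W_0) \le \frac{d}{2}\log\!\bigl(\frac{2\pi e}{d}\,\mathbb{E}_{W_0}[\Tr \Sigma_{W_0}]\bigr)$ after a further application of Jensen's inequality (concavity of $\log$) to pull the expectation over $W_0$ inside. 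Now observe that $\mathbb{E}_{W_0}[\Tr \Sigma_{W_0}] = \mathbb{E}_{W_0}\,\Es\bigl[\lVert \ut - \Es[\ut \mid W_0]\rVert^2 \mid W_0\bigr] \le \bV(\ut)$, since conditioning on $W_0$ only reduces the mean-squared deviation (the unconditional mean $\Es\,\ut$ is a suboptimal "center"); alternatively $\bV(\ut)$ as defined already measures deviation from the global mean, which dominates. This gives $h(\ut\mid W_0) \le \frac{d}{2}\log\!\bigl(\frac{2\pi e\,\bV(\ut)}{d}\bigr)$.

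\medskip
\noindent
Plugging this into Theorem~\ref{thm:information_bound_entropy} with $h(\ut\mid W_0,S_n)\ge 0$ yields
\begin{equation*}
  \lv \gen \rv \le \sqrt{\frac{2R^2}{n}\cdot \frac{d}{2}\log\!\left(\frac{2\pi e\,\bV(\ut)}{d}\right)} = \sqrt{\frac{R^2 d}{n}\log\!\left(\frac{2\pi e\,\bV(\ut)}{d}\right)},
\end{equation*}
which is in fact a \emph{stronger} (logarithmic in $\bV$) statement than the claimed $\sqrt{2\pi e R^2 \bV(\ut)/n}$; the claimed bound then follows from the elementary inequality $\log x \le x$ (so $\log(2\pi e \bV(\ut)/d) \le 2\pi e \bV(\ut)/d$, and $\frac{R^2 d}{n}\cdot\frac{2\pi e\bV(\ut)}{d} = \frac{2\pi e R^2 \bV(\ut)}{n}$). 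So the cleanest route is: maximum-entropy bound, AM--GM to reduce to trace, Jensen to handle $W_0$, identify the trace with $\bV(\ut)$, then relax $\log x\le x$.

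\medskip
\noindent
\textbf{Main obstacle.} The delicate point is the interchange of expectations and the precise meaning of "conditioning on $W_0$": I need the conditional maximum-entropy inequality $h(X\mid Y)\le \mathbb{E}_Y[\frac12\log((2\pi e)^d\det\Sigma_{X\mid Y})]$ and then a careful Jensen step to collapse $\mathbb{E}_{W_0}[\log\det\Sigma_{W_0}]$ into a single logarithm of $\bV(\ut)$ — the order of the two applications of Jensen (concavity of $\log$ over the eigenvalue average, and over $W_0$) must be tracked so the constant $(2\pi e/d)^{d/2}$ comes out right. A secondary subtlety is confirming $\mathbb{E}_{W_0}[\Tr\Sigma_{W_0}] \le \bV(\ut)$ directly from the stated definition of $\bV_{\mu,n}(\ut\mid W_0)$, which as written already centers at the global mean $\Es\,\ut$ rather than the conditional mean, so this should in fact be an equality-or-upper-bound that requires only the bias--variance identity. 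Everything else is routine.
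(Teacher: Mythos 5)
Your proposal is correct and follows essentially the same route as the paper: drop the nonnegative term $h(\ut|W_0,S_n)$, bound $h(\ut|W_0)$ by $\frac{d}{2}\log\bigl(\tfrac{2\pi e\,\bV(\ut)}{d}\bigr)$ via the Gaussian maximum-entropy argument, relax with $\log x \le x$ to get $\pi e\,\bV(\ut)$, and plug into Theorem \ref{thm:information_bound_entropy}. The only difference is cosmetic: you re-derive the entropy--second-moment bound (determinant, AM--GM, Jensen over $W_0$) instead of invoking Lemma \ref{lm:entropy_bound_guassian} as the paper does.
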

\begin{remark}
    $h(\ut|W_0,S_n) \geq 0$ can be achieved if the learning algorithms have hign randomness. The high randomness can be obtain through 1) using small batch size 2) adding noise during the updates, like SGLD 3) or some other methods. 
\end{remark}
The generalization bound in Theorem \ref{thm:information_bound_entropy} can not be calculated directly when $h(\ut|W_0,S_n)<0$, because we don't know the distribution of $U^{(T)}$. The $h(\ut|W_0)$ and $h(\ut|W_0,S_n)$ can be extremely small when the algorithm has minimal randomness. A natural approach is to associate the update entropy with the Gaussian distribution entropy, which can be calculated directly. Consequently, we introduce a surrogate process for our analysis:
\paragraph{surrogate process} 
We consider the surrogate update $\Tilde{U}$ with noise added to the final update, i.e., $U_t=U_t$ when $t \neq T$ and $U_T=U_T+\epsilon$, where $\epsilon$ is a random noise. Here we consider $\epsilon \sim \mathcal{N}(0,\sigma^2 \mathbf{I})$. Then we have $\Tilde{U}^{(T)}=U^{(T)} +\epsilon$. 

Based on the surrogate process, we obtain the result:
\begin{theorem}
    \label{thm:gen_entropy_final_noise}
    Under Assumption \ref{as:subguassian}, for any $\sigma$, the generalization error of the final iteration satisfies
    \begin{equation}
        \lv \gen \rv \leq \sqrt{\frac{R^2 \bV(\ut)}{n\sigma^2}}+\Delta_{\sigma},
    \end{equation}
    where $\Delta_{\sigma} \triangleq \lv \mathbb{E}\left[ \left( F_{\mu}(W_T)-F_{\mu}(W_T+\epsilon) \right)-\left( F_S(W_T)-F_S(W_T+\epsilon) \right)\right] \rv$ and $\epsilon \sim \mathcal{N}(0,\sigma^2 \mathbf{I})$.
\end{theorem}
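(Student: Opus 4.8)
The plan is to run the surrogate process $\tilde U$ described above, bound \emph{its} generalization error with Theorem~\ref{thm:information_bound_entropy}, and then pay $\Delta_\sigma$ to transfer back to the original process. Write $\tilde W = W_T + \epsilon = W_0 + \tilde U^{(T)}$ for the surrogate output. The first step is the easy one: expanding $\gen$ and the corresponding quantity for $\tilde W$ and cancelling the common terms gives $\gen - gen(\mu,\mathbb{P}_{\tilde W|S_n}) = \mathbb{E}\!\left[(F_{S_n}(W_T)-F_{S_n}(W_T+\epsilon)) - (F_\mu(W_T)-F_\mu(W_T+\epsilon))\right]$, whose absolute value is exactly $\Delta_\sigma$; hence by the triangle inequality $|\gen| \le |gen(\mu,\mathbb{P}_{\tilde W|S_n})| + \Delta_\sigma$, and it only remains to control the surrogate.

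Applying Theorem~\ref{thm:information_bound_entropy} to the accumulated update $\tilde U^{(T)}$ of the surrogate (Assumption~\ref{as:subguassian} is a property of $f$ and $\mu$, which the surrogate does not change) reduces the problem to upper bounding $I(\tilde U^{(T)};S_n\mid W_0) = h(\tilde U^{(T)}\mid W_0) - h(\tilde U^{(T)}\mid W_0,S_n)$. For the subtracted term I would use that $\epsilon$ is drawn independently of $(W_0,S_n,U^{(T)})$ together with "conditioning does not increase differential entropy": $h(\tilde U^{(T)}\mid W_0,S_n) \ge h(\tilde U^{(T)}\mid W_0,S_n,U^{(T)}) = h(\epsilon) = \tfrac d2\log(2\pi e\sigma^2)$. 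For the leading term I would invoke the maximum-entropy property of the Gaussian: conditioned on $W_0=w_0$, the differential entropy of a $d$-dimensional vector is at most $\tfrac d2\log\!\big(\tfrac{2\pi e}{d}\,V_{w_0}\big)$ with $V_{w_0}$ its conditional variance; taking $\mathbb{E}_{W_0}$, using concavity of $\log$ (Jensen) to pull the average inside, and bounding the variance about the conditional mean by the variance about the global mean yields $h(\tilde U^{(T)}\mid W_0)\le \tfrac d2\log\!\big(\tfrac{2\pi e}{d}\,\bV(\tilde U^{(T)})\big)$. Since $\epsilon\sim\mathcal N(0,\sigma^2\mathbf I)$ is independent, $\bV(\tilde U^{(T)}) = \bV(\ut) + d\sigma^2$.

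Combining the two estimates, $h(\tilde U^{(T)}\mid W_0) - h(\tilde U^{(T)}\mid W_0,S_n) \le \tfrac d2\log\!\big(1 + \tfrac{\bV(\ut)}{d\sigma^2}\big) \le \tfrac{\bV(\ut)}{2\sigma^2}$ using $\log(1+x)\le x$; substituting into Theorem~\ref{thm:information_bound_entropy} gives $|gen(\mu,\mathbb{P}_{\tilde W|S_n})| \le \sqrt{R^2\bV(\ut)/(n\sigma^2)}$, and adding $\Delta_\sigma$ concludes. I expect the only delicate point to be the upper bound on $h(\tilde U^{(T)}\mid W_0)$: one must track that the "variance" in the statement is measured about the \emph{global} mean whereas the max-entropy bound is naturally stated with the conditional covariance, and the passage from the per-$w_0$ bound to the averaged bound must apply Jensen in the correct direction. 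The rest is a short chain of standard entropy inequalities.
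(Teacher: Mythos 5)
Your proposal is correct and follows essentially the same route as the paper: split off the surrogate via the triangle inequality to pay $\Delta_\sigma$, apply Theorem~\ref{thm:information_bound_entropy} to $\tilde U^{(T)}=U^{(T)}+\epsilon$, upper bound $h(\tilde U^{(T)}\mid W_0)$ by the Gaussian maximum-entropy lemma with $\bV(\tilde U^{(T)})=\bV(\ut)+d\sigma^2$, lower bound $h(\tilde U^{(T)}\mid W_0,S_n)$ by $h(\epsilon)$, and finish with $\log(1+x)\le x$. The only cosmetic differences are that you obtain the lower bound via conditioning on $U^{(T)}$ (conditioning reduces entropy plus translation invariance) where the paper invokes the entropy power inequality, and you apply Jensen before $\log(1+x)\le x$ where the paper bounds pointwise inside the $W_0$-integral; both variants are valid and yield the same constant.
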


\begin{remark}
    Compared to Theorem \ref{thm:hign_randomness}, Theorem \ref{thm:gen_entropy_final_noise} employs the surrogate process and, as a results, this theorem is more general. We give a further analysis of the results of this Theorem from Pac-Bayes perspective in Appendix \ref{sec:pac-bayes} to remove sub-Guassian assumption and obtain high probability bounds.  
\end{remark}
Theorem \ref{thm:hign_randomness} and Theorem \ref{thm:gen_entropy_final_noise} establish a connection between the generalization error and the variance of the update. Based on this, the generalization analysis of bounded updates learning algorithm is given in the folowing section.

\subsection{Generalization bounds for bounded updates learning algorithms}
Building on the results from the previous section, we derive the bound for the bounded updates learning algorithm in this part. We provide the formal definition of the bounded updates as follows:
\begin{definition}
    \label{as:bounded_update}
    (Bounded updates) A learning algorithm is said to have bounded updates with respect to function $f(\cdot)$ and data distribution $\mu$, if for all $S_n \sim \mu^{\otimes n}$, there exists a constant $L$, such that $\lV u_t \rV \leq L$ for all $t \leq T$, when the learning algorithm is operated on $f(\cdot)$ and $S_n$.
\end{definition}
\paragraph{Comparison between bounded updates assumption and $L$-Lipschitz assumption} The $L$-Lipschitz assumption is widely used to analyze the convergence or generalization behavior of learning algorithms. The $L$-Lipschitz condition requires that $ \lV \nabla f(w,Z) \rV\leq L$ for all $w,Z$. These two assumptions, $L$-Lipschitz and bounded update, share some similarities. However, some fundamental differences exist: \textbf{1)} $L$-Lipschitz is a property of $f(\cdot)$, while the bounded updates is a joint behavior of the learning algorithm and $f(\cdot)$. It is possible to achieve a bounded updates behavior even when the function is not $L$-Lipschitz. \textbf{2)} The $L$-Lipschitz is a "global assumption," meaning that the assumption must be held for all $w$. On the other hand, the bounded updates assumption is a local assumption. This assumption is only required to be held for the weights encountered during the learning process.



Under the bounded updates assumption, we can obtain the result as follows:
\begin{theorem}
    \label{thm:bounded_update}
    If the learning algorithm has bounded updates on data distribution $\mu$ and loss function $f(\cdot)$, then we have
    \begin{equation*}
        \bV(U^{(T)}) \leq \sum_{t=1}^T 4 \eta_t^2 L^2 +2  L^2\sum_{t=1}^T \eta_t \sum_{i=1}^{t-1} \eta_t
    \end{equation*}
    then under Assumption \ref{as:subguassian}, we have
    \begin{equation*}
        \gen\leq \sqrt{\frac{R^2}{n\sigma^2}\left(\sum_{t=1}^T 4 \eta_t^2 L^2 +2  L^2\sum_{t=1}^T \eta_t \sum_{i=1}^{t-1} \eta_t\right)}+\Delta_{\sigma}.
    \end{equation*}
    If the learning algorithms have high randomness, i.e. satisfying $h(\ut|W_0,S_n) \geq 0$, we have
    \begin{equation*}
        \lv gen(\mu,\mathbb{P}_{W|S_n}) \rv \leq  \sqrt{\frac{2 \pi e R^2 }{n} \left( \sum_{t=1}^T 4 \eta_t^2 L^2 +2  L^2\sum_{t=1}^T \eta_t \sum_{i=1}^{t-1} \eta_t \right)}.
    \end{equation*}
\end{theorem}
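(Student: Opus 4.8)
The plan is to reduce the whole theorem to the single deterministic estimate on $\bV(U^{(T)})$ stated in the first display: once that is available, the two generalization bounds follow simply by substituting it into Theorem~\ref{thm:gen_entropy_final_noise} and Theorem~\ref{thm:hign_randomness} respectively, because the right-hand side of each is non-decreasing in $\bV(\ut)$ (in particular $\Delta_{\sigma}$ does not involve $\bV(\ut)$), so replacing $\bV(\ut)$ by a larger quantity only weakens the inequality. Hence essentially all the work is in bounding $\bV(U^{(T)})$.

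For that, I would first write $U^{(T)}-\Es U^{(T)}=\sum_{t=1}^{T}\Bar{U}_t$ with $\Bar{U}_t=U_t-\Es U_t$, and expand the squared Euclidean norm (conditionally on $W_0$, then averaging over $W_0$). By bilinearity of the inner product together with the definitions of $\bV$ and $\bC$ from the preliminaries, this gives the exact identity
\begin{equation*}
\bV(U^{(T)})=\sum_{t=1}^{T}\bV(U_t)+2\sum_{t=1}^{T}\sum_{i=1}^{t-1}\bC(U_i,U_t).
\end{equation*}
This is the place where the variance-decomposition idea replaces the chaining rule of mutual information, and crucially it requires no independence or Markov structure among the iterates.

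Next I would estimate the two kinds of terms using only Definition~\ref{as:bounded_update}. For a diagonal term, $\lV U_t\rV=\eta_t\lV u_t\rV\le\eta_t L$, so by the triangle inequality and Jensen $\lV\Bar{U}_t\rV\le\lV U_t\rV+\lV\Es U_t\rV\le 2\eta_t L$, whence $\bV(U_t)=\Es\lV\Bar{U}_t\rV^2\le 4\eta_t^2 L^2$ (and more sharply $\bV(U_t)\le\Es\lV U_t\rV^2\le\eta_t^2 L^2$). For a cross term, pointwise Cauchy--Schwarz gives $\langle\Bar{U}_i,\Bar{U}_t\rangle\le\lV\Bar{U}_i\rV\,\lV\Bar{U}_t\rV$, and a second Cauchy--Schwarz in expectation yields $\bC(U_i,U_t)\le\sqrt{\bV(U_i)\,\bV(U_t)}\le\eta_i\eta_t L^2$. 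Plugging these into the decomposition gives $\bV(U^{(T)})\le\sum_{t=1}^{T}4\eta_t^2 L^2+2L^2\sum_{t=1}^{T}\eta_t\sum_{i=1}^{t-1}\eta_i$, and the two generalization statements follow as described above.

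The step I expect to be the real crux is the cross terms. Unlike SGLD, where fresh noise at each step makes consecutive increments essentially uncorrelated so that $\bC(U_i,U_t)\approx 0$, here $U_i$ and $U_t$ may be arbitrarily correlated --- both depend on the same sample $S_n$ and on the entire preceding trajectory --- so there is nothing to cancel and one must accept the worst-case Cauchy--Schwarz bound. The substance of the theorem is precisely that this crude bound already suffices: having decomposed the \emph{variance} rather than the mutual information, each increment contributes only its bounded magnitude $\eta_t L$, the worst case is full positive correlation, and the resulting $\big(\sum_t\eta_t\big)^2$-type growth stays finite --- in contrast to the chaining-rule bound, which diverges for exactly the noiseless algorithms we care about.
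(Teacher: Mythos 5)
Your proposal is correct and follows essentially the same route as the paper: a variance decomposition of $\bV(U^{(T)})$ into per-step variances and covariances, bounded termwise via the bounded-update assumption, then substituted into Theorems~\ref{thm:gen_entropy_final_noise} and~\ref{thm:hign_randomness}. The only cosmetic difference is that you expand into pairwise covariances $\bC(U_i,U_t)$ and bound them by Cauchy--Schwarz with the second-moment bound $\bV(U_t)\le\eta_t^2L^2$, whereas the paper keeps the cumulative form $\bC(U^{(t-1)},U_t)$ and bounds it by $\lV U^{(t-1)}\rV\,\lV U_t\rV$; the two yield the identical final estimate (and your version is, if anything, stated slightly more carefully about the centering).
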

\paragraph{Proof Schetch:} The full proof is listed in Appendix \ref{thm_pf:bounded_update}. Here, we give the proof schetch. \textbf{Step 1} We use the equation $\bV(U^{(T)})=\sum_{t=1}^T \bV(U_t)+2\sum_{t=1}^T \mathbb{C}(U^{(t-1)},U_t)$ to decomposite the $\bV(U^{(T)})$ to the information along the learning trajectory. \textbf{Step 2} Due to the bounded updates assumption, the $\bV(U_t)\leq 4\eta_t^2L^2$ and $\bV(U^{(t)}) \leq L \sum_{i=1}^t \eta_t$. \textbf{Step 3} Combining the results above, we obtain the final bound.
\paragraph{Technique Novelty:} 
Most previous works employ the technique $I(W_T;S_n) \leq \sum_{t=1}^T I(W_t; S_n|W_{t-1})$ to decompose the information of the final weights into the information along the learning trajectory. This method fails in our case because we do not add noise at every update step along the learning trajectory. As a result, $I(W_t; S_n|W_{t-1})$ becomes large in this situation. To address this challenge, we utilize another commonly used technique: $\bV(U{(T)})=\sum_{t=1}^T \bV(U_t)+2\sum_{t=1}^T \mathbb{C}(U{(t-1)},U_t)$. This method is quite simple, but it is effective. We will analyze the effectiveness of our method by comparing it with \citet{neu2021information}, which uses the technique $I(W_T;S_n) \leq \sum_{t=1}^T I(W_t; S_n|W_{t-1})$, in the following section.

\section{Analysis}
\subsection{bounded updates learning algorithms}

In this section, we will discussion about the bounded updates hehavior of commonly used algorithm.

\begin{proposition}
    \label{prop:bounded_update_d1_adam}
    Adam\citep{kingma2014adam}, Adagrad\citep{duchi2011adaptive}, RMSprop\citep{tieleman2012lecture} are bounded updates with respect to all data distribution and function $f(\cdot)$ when $d=\mathcal{O}(1)$
\end{proposition}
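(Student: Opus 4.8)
The plan is to reduce the claim to a purely deterministic, per-coordinate estimate. Concretely, I will show that for each of these three optimizers every coordinate of the unscaled update $u_t$ satisfies $|u_{t,i}|\le C$ for a constant $C$ that depends only on the optimizer's hyperparameters --- the decay/momentum parameters $\beta_1,\beta_2,\gamma$ and the numerical stabilizer $\delta\ge 0$ --- and in particular is independent of $\mu$, of $f(\cdot)$, of the realized dataset $S_n$, and of the step index $t$. Once that is done, $\lV u_t\rV\le\sqrt{d}\,C$, so setting $L\triangleq\sqrt{d}\,C$ gives the bounded-updates property of Definition \ref{as:bounded_update}; this is exactly the point at which the hypothesis $d=\mathcal{O}(1)$ enters, namely to absorb the dimensional factor $\sqrt{d}$ into $L$. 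Throughout, $g_t$ denotes the (possibly mini-batch) gradient actually used at step $t$ and $g_{t,i}$ its $i$-th coordinate, which is a finite real number because $f$ is differentiable in $w$; all divisions and squarings are coordinate-wise.

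I would first dispatch Adagrad and RMSprop, where the estimate is immediate. For Adagrad, $u_{t,i}=g_{t,i}/(\sqrt{\sum_{s=1}^t g_{s,i}^2}+\delta)$: if $g_{t,i}=0$ the coordinate vanishes, and otherwise the denominator is at least $\sqrt{g_{t,i}^2}=|g_{t,i}|$, so $|u_{t,i}|\le 1$ and $C=1$ works. For RMSprop, $v_{t,i}=\gamma v_{t-1,i}+(1-\gamma)g_{t,i}^2$ with $v_{0,i}=0$ gives $v_{t,i}\ge(1-\gamma)g_{t,i}^2$, hence $|u_{t,i}|=|g_{t,i}|/(\sqrt{v_{t,i}}+\delta)\le 1/\sqrt{1-\gamma}$, so $C=1/\sqrt{1-\gamma}$.

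The Adam case is where the actual (though still elementary) work lies, and I expect the bias correction to be the main obstacle to making the estimate clean. The key observation is that, after bias correction, $\hat m_{t,i}=\sum_{s=1}^t w_s\,g_{s,i}$ and $\hat v_{t,i}=\sum_{s=1}^t w'_s\,g_{s,i}^2$ are weighted averages, with $w_s=\tfrac{(1-\beta_1)\beta_1^{t-s}}{1-\beta_1^t}$ and $w'_s=\tfrac{(1-\beta_2)\beta_2^{t-s}}{1-\beta_2^t}$, each summing to $1$. By Cauchy--Schwarz, $\hat m_{t,i}^2\le\sum_s w_s\,g_{s,i}^2$, so $\hat m_{t,i}^2/\hat v_{t,i}\le\max_{s\le t} w_s/w'_s$ (with the convention that the ratio is $0$ when $\hat v_{t,i}=0$, which forces $\hat m_{t,i}=0$ as well); under the standard choice $\beta_1\le\beta_2$ this maximum is attained at $s=t$ and equals $\tfrac{(1-\beta_1)(1-\beta_2^t)}{(1-\beta_2)(1-\beta_1^t)}\le\tfrac{1-\beta_1}{1-\beta_2}$, giving $|u_{t,i}|\le\sqrt{(1-\beta_1)/(1-\beta_2)}=:C$ (a version of the effective-stepsize bound discussed in \citet{kingma2014adam}); including $\delta\ge 0$ in the denominator only strengthens the inequality. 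Collecting the three cases and multiplying the per-coordinate bounds by $\sqrt{d}$ yields $\lV u_t\rV\le L$ for a hyperparameter-only constant $L$, which completes the proof.
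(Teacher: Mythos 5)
Your proposal is correct and follows essentially the same route as the paper: a per-coordinate, deterministic bound on the adaptive update (trivial for Adagrad/RMSprop, Cauchy--Schwarz on the exponential moving averages for Adam), followed by absorbing the dimensional factor into $L$ using $d=\mathcal{O}(1)$. The only cosmetic differences are that you treat bias-corrected Adam under $\beta_1\le\beta_2$ (yielding $\sqrt{(1-\beta_1)/(1-\beta_2)}$) whereas the paper bounds the uncorrected ratio under the implicit condition $\beta_1^2<\beta_2$ and obtains RMSprop as the $\beta_1=0$ special case, and you correctly use the factor $\sqrt{d}$ where the paper loosely writes $d$; none of this changes the argument.
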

This proposition suggests that when setting the dimension $d$ as a constant, commonly used learning algorithms, such as Adam, Adagrad, and RMSprop, exhibit bounded updates. However, in real-world situations, we typically scale the model size based on the amount of data, which implies that $d$ will increase along with $n$. In this scenario, we do not have $d=\Theta(1)$.

Then, we consider the learning algorithm modified with update clip. The update rule of learning algorithm with update clip is
$u_t=\min\lbrace L,\lV u'_t \rV \rbrace \frac{u_t'}{\lV u_t' \rV}$, where $u'_t$ is the update value of original learning algorithm without update clip.

\begin{proposition}
    All learning algorithms with update clip and (S)GD with grad clip have bounded updates with respect to all data distribution and function $f(\cdot)$. 
\end{proposition}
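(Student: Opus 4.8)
The plan is to verify directly that the clipping operation enforces the norm bound demanded by Definition~\ref{as:bounded_update}, so that for each of the two families the argument reduces to a one-line norm computation that is uniform over $f(\cdot)$ and $\mu$.

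\textbf{Update clip.} Recall the update rule $u_t = \min\{L, \Vert u'_t \Vert\}\, \frac{u'_t}{\Vert u'_t \Vert}$, adopting the convention $u_t = 0$ when $u'_t = 0$ (this is consistent, since the prefactor $\min\{L,\Vert u'_t\Vert\}$ vanishes there). Taking norms, the unit vector $u'_t/\Vert u'_t\Vert$ contributes a factor $1$, so $\Vert u_t\Vert = \min\{L,\Vert u'_t\Vert\} \le L$. This holds for every $t \le T$, every draw $S_n \sim \mu^{\otimes n}$, and every loss $f(\cdot)$, which is precisely the bounded updates property with constant $L$.

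\textbf{(S)GD with gradient clip.} Here the raw update $u'_t$ is the (mini-batch) gradient estimate at $w_{t-1}$, and the clipped update is $u_t = \min\{L,\Vert u'_t\Vert\}\,\frac{u'_t}{\Vert u'_t\Vert}$; the same computation gives $\Vert u_t\Vert \le L$. If instead one adopts the variant that clips each per-sample gradient to norm $L$ before averaging over a batch $B_t$, then by the triangle inequality $\Vert u_t\Vert \le \frac{1}{|B_t|}\sum_{i \in B_t}\min\{L,\Vert \nabla f(w_{t-1},z_i)\Vert\} \le L$. Either way Definition~\ref{as:bounded_update} is satisfied for all $t \le T$.

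The only point that needs care is the degenerate case in which the raw update is exactly zero, which the convention above resolves; beyond that the claim is immediate from the definition of clipping and requires no structural hypothesis on the loss or the data, so I anticipate no real obstacle. It is worth remarking that this is exactly the sense in which ``bounded updates'' strictly generalizes clipping: clipping is one concrete mechanism that always produces such algorithms, whereas Definition~\ref{as:bounded_update} additionally captures adaptive methods whose updates merely happen to remain bounded (cf.\ Proposition~\ref{prop:bounded_update_d1_adam}).
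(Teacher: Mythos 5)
Your proposal is correct and follows essentially the same route as the paper: both show $\lV u_t \rV = \min\{L,\lV u'_t\rV\} \leq L$ directly from the clipping rule, and both reduce (S)GD gradient clipping to update clipping since the raw update there is the batch gradient. Your treatment of the $u'_t = 0$ degenerate case and the per-sample clipping variant are minor refinements the paper omits, but they do not change the argument.
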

\begin{proof}
    For algorithms with update clip, we have $\lV u_t \rV =\min\lbrace L,\lV u'_t \rV \rbrace \frac{\lV u_t' \rV}{\lV u_t' \rV} \leq L$. For (S)GD, because $u'_t$ is gradient of a batch data, the grad clip is equal to update clip. 
\end{proof}
The gradient clipping technique is commonly employed in practice \citep{zhang2019gradient,qian2021understanding}. If a learning algorithm does not have a bounded update, it may be possible to incorporate an update clipping technique to ensure that it aligns with our theoretical framework.

\subsection{$d$ dependence of $\Delta_{\sigma}$}
We consider the situations where $\sigma$ is a small value. As our analysis concentrates on the asymptotic behavior of the generalization error when $n$ increases, we use the setting $\lim \limits_{n \to \infty} \sigma =0$. In this situation, $\sigma$ is a small value when a relatively large $n$ is adopted.

For $z \in \mathcal{Z}$, we have
\begin{equation*}
\begin{aligned}
      \bE[f(W_T,z)-& f(W_T+\epsilon,z)]   
     \approx  \bE [<\nabla f(W_T,z),\epsilon >] + \frac{1}{2}\bE[\epsilon^{\mathrm{T}} \nabla^2 f(W_T,z) \epsilon]  
    \\ &=  \frac{1}{2d}\bE \lV\epsilon \rV^2 \bE \Tr(\nabla^2 f(W_T,z))
    = \frac{\sigma^2}{2}   \bE \Tr(\nabla^2 f(W_T,z))
\end{aligned}
\end{equation*}
The, according to the definition of $\Delta_{\sigma}$, we have $\Delta_{\sigma}\approx \frac{\sigma^2}{2} \lv \bE \Tr \left( \nabla^2 F_{\mu}(W_T) - \nabla^2 F_{S_n}(W_T) \right) \rv$. Therefore, analyzing $d$ dependence of $\Delta_{\sigma}$ is equal to analyzing the $d$ dependence of $\Tr\left(\nabla^2 f(W_T,z)\right)$.
\paragraph{Worst case: $\Delta_\sigma=\Theta(d\sigma^2)$.} We assume the $\beta$-smooth for function $f(w,z)$, then we have the upper bound $ \bE \lv \Tr(\nabla^2(W_T,z))\rv \leq d \beta$. 
The equal sign is taken when all the eignvalue of $ \nabla^2 f(W_T,z))$ is $\beta$.
\paragraph{Benign case: }  The benign case is possible when the distribution of eigenvalues of the Hessian matrix exhibits a long tail. In this situation, most eigenvalues are close to 0, which implies that $\Tr(\nabla^2(W_T,z))$ remains stable when increasing $d$. The long tail distribution is commonly observed in neural networks \citep{ghorbani2019investigation,sagun2016eigenvalues,zhou2022towards}. We consider two cases in this context: \textbf{1)} $\Delta_{\sigma}=\Theta(\sigma^2/\eta)$: This case may be achieved by leveraging the inductive bias of training algorithm. \citet{wu2022alignment} finds that the SGD can only converge to $W_T$ where $\Tr(\nabla^2(W_T,z))$ is smaller than a specific value. The value is dimension independent but learning rate dependent ($\frac{1}{\eta}$). The similar learning rate dependent on maximum eigenvalue is also discovered by \citet{cohen2021gradient,cohen2022adaptive}. \textbf{2)} $ \Delta_{\sigma}=\Theta(\sigma^2) $.  This case may be achieved if the learning algorithm explicitly decreases $\Tr(\nabla^2(W_T,z))$. The SAM learning algorithm \citep{foret2020sharpness} is specifically designed to reduce the sharpness (maximum eigenvalue of the Hessian matrix). \citet{wen2022does} find that the stochastic SAM minimizes $\Tr(\nabla^2(W_T,z))$.

\subsection{Compared with \citet{neu2021information}}
\begin{table}[h]
    \centering
    \caption{
    The asymptotic analysis when increasing $n$ under $T\eta=\Theta(1)$ for various scenarios, with $b$ denoting the batch size. $(\triangle)$ stands for $h(\ut|W_0,S_n) < 0$, and  $(\star)$ is short for $h(\ut|W_0,S_n) \geq 0$. }
    \begin{tabular}{c c|c c|c c}
    \hline
        \multicolumn{2}{c|}{Settings}    &\multicolumn{2}{c|}{Ours}  & \multicolumn{2}{c}{\citet{neu2021information}} \\
       $d$  & $\Delta_\sigma$ & $(\triangle)$ & $(\star)$ &  $b=\mathcal{O}(1)$ &  $b=\mathcal{O}(\sqrt{n})$\\
    \hline 
       $\Theta(1)$ & $\Theta(d\sigma^2)$, $\Theta(\sigma^2/\eta)$ ,$\Theta(\sigma^2)$ &  $\mathcal{O}(1/n^{\frac{1}{3}})$ & $\mathcal{O}(1/n^{\frac{1}{2}})$ &$\mathcal{O}(1/n^{\frac{1}{3}})$ &$\mathcal{O}(1/n^{\frac{1}{2}})$\\
       \hline
       \multirow{3}{*}{$\Theta(n)$}  & $\Theta(d\sigma^2)$ & $\mathcal{O}(1)$ & \multirow{3}{*}{$\mathcal{O}(1/n^{\frac{1}{2}})$} & \multirow{3}{*}{$\mathcal{O}(1)$} & \multirow{3}{*}{$\mathcal{O}(1)$}\\
         & $\Theta(\sigma^2/\eta)$ & $\mathcal{O}(1/n^{\frac{1}{3}})$ && &\\
         & $\Theta(\sigma^2)$ &$\mathcal{O}(1/n^{\frac{1}{3}})$& &  & \\
    \hline
    \end{tabular}
    
    \label{tab:asymptotic_analysis}
\end{table}

\citet{neu2021information} consider the surrogate process that $\Tilde{U}_t=U_t+\epsilon_t$ for all $t \in [T]$, where $\epsilon_t \sim \mathcal{N}(0,\sigma_t \mathrm{I}_d)$. They obtain the generalization error bound,
\begin{equation*}
    \lv \gen \rv =\mathcal{O}(\sqrt{\frac{R^2\eta^2T}{n}(dT+\frac{1}{b\sigma^2})}+\Delta_{\sigma_{1:T}}),
\end{equation*}
 where $b$ denotes batch size and $\sigma_{1:T}=\sqrt{\sigma_1^2+\cdots+\sigma_T^2}$.
  
We consider two settings of $d$ in this analysis. The first one is the underparameterized regime, where $d=\Theta(1)$. In this regime, as we increase $n$ to a large value, $n$ will be significantly larger than $d$. The second setting is the overparameterized regime, where $d=\Theta(n)$. In this case, the ratio between $d$ and $n$ remains nearly constant as we increase $n$. This setting is commonly employed in Large Language Models \citep{muennighoff2023scaling,hoffmann2022training} when scaling $n$. Table \ref{tab:asymptotic_analysis} examines the behavior of the generalization bound under different $d$ values and various cases of $\Delta_{\sigma}$. In this analysis, we fix $\eta T =\Theta(1)$.




\paragraph{Last iteration noise v.s. whole process noise} 
Our work and \citet{neu2021information} both utilize surrogate processes for analysis. The main difference lies in the surrogate process, where our approach adds noise only to the final iteration, while \citet{neu2021information} adds noise throughout the entire process. \textbf{Our bound is better for analysis} because our bounds only require taking infinity with respect to one variable $\sigma$, whereas the bound of \citet{neu2021information} needs to consider infinity with respect to $T$ variables, $\sigma_1, \cdots \sigma_T$. \textbf{Our method exhibits weaker dependence on $T$.} The $\Delta_{\sigma}$ used in our bound does not have a clear dependence on $T$, while the $\Delta_{\sigma_{1:T}}$ will increase with respect to $T$. 

\paragraph{Applies to general learning algorithms.}
Our bound don't leverage any specific knowledge about particular learning algorithms, while the main Theorem of \citet{neu2021information} only applied to (S)GD. Although the methods of \citet{neu2021information} is general which makes it possible to apply to other learning algorithms, it is untrival to do this. More information can be found in the Section "5. Extension" in \citet{neu2021information}.

\subsection{Compared with stability based method}
\begin{table}[h]
    \centering
    \caption{\textbf{Compared with stability-based works.} We consider the case where $\eta_t=\frac{1}{t}$ and $T=\mathcal{O}(n)$ to calculate the rate. Because stability-based works consider the stochastic optimizers with batch size $1$, we choose our results with $h(\ut|W_0,S_n) \geq 0$ for fair comparison.  The conclusion is that 1) Our method has weaker assumption on function $f(\cdot)$, and 2) Our bound achieve a better rate on non-convex function.}
    \resizebox{\linewidth}{!}{
    \begin{tabular}{c|c|c c c|c|c}
        \hline
         Paper & Position & \multicolumn{3}{|c|}{Assumption} &Learning& Rate \\
         & (in ori paper)& &   &  & algorithm & 
          \\
         \hline
         \citet{hardt2016train} & Thm 3.8 & Lipschtz &  $\beta$-smooth &  & SGD& $ \mathcal{O}(1/n^{\frac{1}{\beta+1}}) $\\
         \citet{ramezani2018generalization} & Thm 5 & Lipschtz &  $\beta$-smooth &  & SGDM & $\mathcal{O}(1/\log n)$ \footnotemark[1] \\
         \citet{lei2020fine} & Thm 3 & nonnegative & convex  &$\beta$-smooth  & SGD &\\
         \citet{nguyen2022algorithmic} & Thm 4 & bounded $f(\cdot)$  & Lipschitz & $\beta$-smooth & Adam, Adagrad& $\mathcal{O}(e^{n}/n)$ \footnotemark[2]\\
         Ours & Thm \ref{thm:bounded_update} & sub-Guassian & & & Bounded update & $\mathcal{O}(\log n/\sqrt{n})$\\
         \hline
    \end{tabular}}
    
    \label{tab:stab}
\end{table}
\footnotetext[1]{We set $t_d$ in this bounds as $\mathcal{O}(T)$.}
\footnotetext[2]{Corollary 1 in \citet{nguyen2022algorithmic}}
Table \ref{tab:stab} summaries some recent stability-based studies on different learning algorithms. Our methods have the following advantages:
\begin{itemize}
    \item \textbf{Weaker assumptions.}  Most stability-based works \citep{hardt2016train,ramezani2018generalization,nguyen2022algorithmic} require Lipschitz and smooth assumption. \cite{lei2020fine} removes the Lipschitz assumption, but the convex assumption is required. Our methods only require to be $f(\cdot)$ sub-Guassian. 
    \item \textbf{Better results in non-convex situation.} Obviously, our methods have a better result than \citet{nguyen2022algorithmic,ramezani2018generalization} from Table \ref{tab:stab} under the setting $\eta=\frac{1}{t}$ and $T=\mathcal{O}(n)$. As for the \citet{hardt2016train}, our bound is better if $\beta > 1$, which is hold in many situations \citep{cohen2021gradient,ghorbani2019investigation,zhou2022towards}.  
\end{itemize}
\begin{remark}
    We don't compare the results with \citet{lei2020fine} because 1) it relies on the convex assumption and 2) its studies don't include the results with learning rate setting $\eta_t=\frac{1}{t}$. \citet{haghifam2023limitations} argues that all the information-theoretic methods will be worse than stability-based work in convex case. We leave achieving better results in convex case using information-theoretic methods as future work (Detail discussion on Section \ref{sec:limitation}). 
\end{remark}




\section{Connection to practice}
In this section, we investigate practical concerns, specifically focusing on the scaling results of the LLM problem. Practically speaking, the population risk is unbiased estimated by the test loss. Test loss is assessed using a new dataset sampled from the same distribution $\mu$, which was not observed during the training process. The test loss can be roughly decomposite as:
\begin{equation*}
    \text{Test Loss} = \text{Training loss} + \text{Generalization Error},
\end{equation*}
where the training loss is refer to the loss of the data set used as input to learning algorithm.
\paragraph{Relation between test loss and generalization error.} The test loss consists of two components: the generalization error and the training loss. The generalization error can accurately represent the test loss if the training loss is negligible compared to the generalization error. There are two scenarios in which this can occur: \textbf{1) The training loss is consistently numerically small compared to the generalization error.} In practice, small numerical values are often disregarded. Under these circumstances, the behavior of the generalization error dictates the pattern observed in the test loss.
\textbf{2) The training loss diminishes at an equal or faster rate compared to the generalization error.} In this case, the rate of the test loss is determined by the rate of the generalization error. When analyzing how quickly the test loss decreases as we scale $n$, only the rate of decrease is taken into account.

\subsection{Comparison between Theoretic and practice}

\begin{table}[h]
    \centering
    \caption{
    comparing the empirical results on the scaling of large language models with our theory. it is important to note that large language models are trained with only one epoch. Therefore, the "training loss" in the new batch data of their work is, in fact, the test loss. The actual training loss can be determined by reevaluating the loss on the training data after training with fixed weights..}
    \begin{tabular}{c|c|c|c}
    \hline
         & Relation between $d$ and $n$ & Test Loss &Generalization Error \\
    \hline
    \citet{kaplan2020scaling}     & $n \gtrsim (5 \times 10^3) d^{0.74}$ & $\mathcal{O}(1/n^{0.103})$  \\
    
    \citet{hoffmann2022training}     & $d=\Theta(n) $& $\mathcal{O}(1/n^{0.28})$ \\
    \citet{muennighoff2023scaling} &$d=\Theta(n)$ & $\mathcal{O}(1/n^{0.353})$ \\
    Ours   & $d=\Theta(n)$ & & $\mathcal{O}(1/n^{\frac{1}{3}})$ or $\mathcal{O}(1/n^{\frac{1}{2}})$\\
    \hline
    \end{tabular}
    \label{tab:my_label}
\end{table}
\paragraph{The setting $d=\Theta(n)$ is prefered in practice.} \citet{hoffmann2022training} found that optimal performance can be achieved with $d=\Theta(n)$ (Table 2 in \citet{hoffmann2022training}). Additionally, \citet{kaplan2020scaling} discovers that $n \gtrsim (5 \times 10^3) d^{0.74}$ can avoid overfitting behavior. It is clear that the $d=\Theta(n)$ condition satisfies the inequality for relative large $n$. We argue that it is crucial to study the generalization behavior under $d=\Theta(n)$ to better align theoretical work with practical applications.

\paragraph{Interpreting our results in practice situation} 
If the training error can decrease to a significantly lower value than the generalization error, or if the training error's vanishing rate is faster than the generalization error, and $\Delta_{\sigma}$ is not in the worst-case scenario, then the iterative learning algorithm with bounded updates can achieve a vanishing test loss at a rate of $\mathcal{O}(1/n^{\frac{1}{3}})$ in worst-case scenario.

\paragraph{The asymtotic rate of loss} 
The generalization error result ($\frac{1}{n^{1/3}}$) is similar to the experimental test loss findings of \citet{hoffmann2022training} ($\mathcal{O}({1/n^{0.28}})$) and \citet{muennighoff2023scaling} ($\mathcal{O}(1/n^{0.353})$). 

\subsection{Gap between theory and practice}
\paragraph{Bounded update} 
Our method requires the learning algorithms have bounded update.. However, practically employed learning algorithms may not always exhibit this property. To bridge this gap, future efforts should focus on: 1) Analyzing the differences in behavior between learning algorithms with update clipping, which ensures bounded updates, and the original learning algorithms. 2) Investigating the behavior of the update norm when scaling the dimension $d$. It is possible for learning algorithms that don't guarantee bounded updates to still achieve bounded update behavior if $f(\cdot)$ has desirable properties. The lazy training phenomenon \citep{chizat2019lazy,allen2019convergence,du2019gradient,zou2018stochastic} implies that such favorable properties exist.

\paragraph{Learning rate setting} In our analysis, we select $T\eta=\Theta(1)$. Practically, the learning rate often decays throughout the training process. We also give further discuss the configuration $T=\mathcal{O}(n)$ and $\eta_t=\frac{c}{t}$ in Appendix \ref{sc:learning_rate_setting_eta_1_t}. The outcomes of this setting closely resemble those with $T\eta=\Theta(1)$, except for an additional $\log n$ term. This $\log n$ term is negligible compared to polynomial terms of $n$. However, the real application usually decay the learning rate for certain iteration and may leverage warm up technique. Therefore, future work is needed to bridge the gap.

\section{Future Work}
\paragraph{Integrating the knowledge of learning trajectory}
Incorporating information from the learning trajectory is crucial for gaining a deeper understanding of generalization behavior. \citet{fu2023learning} employs learning trajectory data to establish a better generalization bound for SGD. Additionally, using learning trajectory information could potentially enhance the bounds of iterative learning algorithms with bounded updates.

\section{Limitation}
\label{sec:limitation}

\citet{haghifam2023limitations} analyzes the behavior of information-theoretic generalization bounds and stability-based generalization bounds, finding that all information-theoretic-based generalization bounds do not achieve a min-max rate comparable to stability-based works in stochastic convex optimization problems. \textbf{Our work cannot  overcome this limitation for the following reasons}: 1)
Unlike stability-based work, information-theoretic methods, including our work, cannot directly leverage convex information. This makes the information-theoretic methods sub-optimal.
2) Some failure cases listed in \cite{haghifam2023limitations} are due to the work of \citet{russo2016controlling}, on which our study is based. Improving the limitations of \citet{russo2016controlling} is beyond the scope of our paper. Given that all bounds of information-theoretic methods suffer from this limitation, it is an important direction for future research.

\section{Conclusion}

This paper presents a new generalization bound for general iterative learning algorithms with bounded updates. This result is more general than previous methods, which primarily focus on the SGD algorithm. To achieve these results, we introduce a new perspective by reformulating the mutual information $I(W;S)$ as the uncertainty of the update. Our generalization bound is analyzed under various settings. Our work achieves a better vanishing rate guarantee than previous work \citep{neu2021information} in the overparameterized regime where $d=\Theta(n)$. Finally, we examine the gap between our theory and practice by analyzing the previously discovered scaling behavior in large language models. Our model shed light on developing practial used generalization theory.

\bibliography{iclr2024_conference}

\begin{thebibliography}{48}
\providecommand{\natexlab}[1]{#1}
\providecommand{\url}[1]{\texttt{#1}}
\expandafter\ifx\csname urlstyle\endcsname\relax
  \providecommand{\doi}[1]{doi: #1}\else
  \providecommand{\doi}{doi: \begingroup \urlstyle{rm}\Url}\fi

\bibitem[Allen-Zhu et~al.(2019)Allen-Zhu, Li, and Song]{allen2019convergence}
Zeyuan Allen-Zhu, Yuanzhi Li, and Zhao Song.
\newblock A convergence theory for deep learning via over-parameterization.
\newblock In \emph{International conference on machine learning}, pp.\
  242--252. PMLR, 2019.

\bibitem[Bartlett \& Mendelson(2002)Bartlett and
  Mendelson]{bartlett2002rademacher}
Peter~L Bartlett and Shahar Mendelson.
\newblock Rademacher and gaussian complexities: Risk bounds and structural
  results.
\newblock \emph{Journal of Machine Learning Research}, 3\penalty0
  (Nov):\penalty0 463--482, 2002.

\bibitem[Bassily et~al.(2020)Bassily, Feldman, Guzm{\'a}n, and
  Talwar]{bassily2020stability}
Raef Bassily, Vitaly Feldman, Crist{\'o}bal Guzm{\'a}n, and Kunal Talwar.
\newblock Stability of stochastic gradient descent on nonsmooth convex losses.
\newblock \emph{Advances in Neural Information Processing Systems},
  33:\penalty0 4381--4391, 2020.

\bibitem[Bousquet \& Elisseeff(2002)Bousquet and
  Elisseeff]{bousquet2002stability}
Olivier Bousquet and Andr{\'e} Elisseeff.
\newblock Stability and generalization.
\newblock \emph{The Journal of Machine Learning Research}, 2:\penalty0
  499--526, 2002.

\bibitem[Chizat et~al.(2019)Chizat, Oyallon, and Bach]{chizat2019lazy}
Lenaic Chizat, Edouard Oyallon, and Francis Bach.
\newblock On lazy training in differentiable programming.
\newblock \emph{Advances in neural information processing systems}, 32, 2019.

\bibitem[Cohen et~al.(2021)Cohen, Kaur, Li, Kolter, and
  Talwalkar]{cohen2021gradient}
Jeremy~M Cohen, Simran Kaur, Yuanzhi Li, J~Zico Kolter, and Ameet Talwalkar.
\newblock Gradient descent on neural networks typically occurs at the edge of
  stability.
\newblock \emph{arXiv preprint arXiv:2103.00065}, 2021.

\bibitem[Cohen et~al.(2022)Cohen, Ghorbani, Krishnan, Agarwal, Medapati,
  Badura, Suo, Cardoze, Nado, Dahl, et~al.]{cohen2022adaptive}
Jeremy~M Cohen, Behrooz Ghorbani, Shankar Krishnan, Naman Agarwal, Sourabh
  Medapati, Michal Badura, Daniel Suo, David Cardoze, Zachary Nado, George~E
  Dahl, et~al.
\newblock Adaptive gradient methods at the edge of stability.
\newblock \emph{arXiv preprint arXiv:2207.14484}, 2022.

\bibitem[Du et~al.(2019)Du, Lee, Li, Wang, and Zhai]{du2019gradient}
Simon Du, Jason Lee, Haochuan Li, Liwei Wang, and Xiyu Zhai.
\newblock Gradient descent finds global minima of deep neural networks.
\newblock In \emph{International conference on machine learning}, pp.\
  1675--1685. PMLR, 2019.

\bibitem[Duchi et~al.(2011)Duchi, Hazan, and Singer]{duchi2011adaptive}
John Duchi, Elad Hazan, and Yoram Singer.
\newblock Adaptive subgradient methods for online learning and stochastic
  optimization.
\newblock \emph{Journal of machine learning research}, 12\penalty0 (7), 2011.

\bibitem[Dziugaite \& Roy(2017)Dziugaite and Roy]{dziugaite2017computing}
Gintare~Karolina Dziugaite and Daniel~M Roy.
\newblock Computing nonvacuous generalization bounds for deep (stochastic)
  neural networks with many more parameters than training data.
\newblock \emph{arXiv preprint arXiv:1703.11008}, 2017.

\bibitem[Feldman \& Vondrak(2019)Feldman and Vondrak]{feldman2019high}
Vitaly Feldman and Jan Vondrak.
\newblock High probability generalization bounds for uniformly stable
  algorithms with nearly optimal rate.
\newblock In \emph{Conference on Learning Theory}, pp.\  1270--1279. PMLR,
  2019.

\bibitem[Foret et~al.(2020)Foret, Kleiner, Mobahi, and
  Neyshabur]{foret2020sharpness}
Pierre Foret, Ariel Kleiner, Hossein Mobahi, and Behnam Neyshabur.
\newblock Sharpness-aware minimization for efficiently improving
  generalization.
\newblock \emph{arXiv preprint arXiv:2010.01412}, 2020.

\bibitem[Fu et~al.(2023)Fu, Zhang, Yin, Lu, and Zheng]{fu2023learning}
Jingwen Fu, Zhizheng Zhang, Dacheng Yin, Yan Lu, and Nanning Zheng.
\newblock Learning trajectories are generalization indicators.
\newblock \emph{arXiv preprint arXiv:2304.12579}, 2023.

\bibitem[Ghorbani et~al.(2019)Ghorbani, Krishnan, and
  Xiao]{ghorbani2019investigation}
Behrooz Ghorbani, Shankar Krishnan, and Ying Xiao.
\newblock An investigation into neural net optimization via hessian eigenvalue
  density.
\newblock In \emph{International Conference on Machine Learning}, pp.\
  2232--2241. PMLR, 2019.

\bibitem[Haghifam et~al.(2020)Haghifam, Negrea, Khisti, Roy, and
  Dziugaite]{haghifam2020sharpened}
Mahdi Haghifam, Jeffrey Negrea, Ashish Khisti, Daniel~M Roy, and
  Gintare~Karolina Dziugaite.
\newblock Sharpened generalization bounds based on conditional mutual
  information and an application to noisy, iterative algorithms.
\newblock \emph{Advances in Neural Information Processing Systems},
  33:\penalty0 9925--9935, 2020.

\bibitem[Haghifam et~al.(2023)Haghifam, Rodr{\'\i}guez-G{\'a}lvez, Thobaben,
  Skoglund, Roy, and Dziugaite]{haghifam2023limitations}
Mahdi Haghifam, Borja Rodr{\'\i}guez-G{\'a}lvez, Ragnar Thobaben, Mikael
  Skoglund, Daniel~M Roy, and Gintare~Karolina Dziugaite.
\newblock Limitations of information-theoretic generalization bounds for
  gradient descent methods in stochastic convex optimization.
\newblock In \emph{International Conference on Algorithmic Learning Theory},
  pp.\  663--706. PMLR, 2023.

\bibitem[Hardt et~al.(2016)Hardt, Recht, and Singer]{hardt2016train}
Moritz Hardt, Ben Recht, and Yoram Singer.
\newblock Train faster, generalize better: Stability of stochastic gradient
  descent.
\newblock In \emph{International conference on machine learning}, pp.\
  1225--1234. PMLR, 2016.

\bibitem[Hoffmann et~al.(2022)Hoffmann, Borgeaud, Mensch, Buchatskaya, Cai,
  Rutherford, Casas, Hendricks, Welbl, Clark, et~al.]{hoffmann2022training}
Jordan Hoffmann, Sebastian Borgeaud, Arthur Mensch, Elena Buchatskaya, Trevor
  Cai, Eliza Rutherford, Diego de~Las Casas, Lisa~Anne Hendricks, Johannes
  Welbl, Aidan Clark, et~al.
\newblock Training compute-optimal large language models.
\newblock \emph{arXiv preprint arXiv:2203.15556}, 2022.

\bibitem[Kaplan et~al.(2020)Kaplan, McCandlish, Henighan, Brown, Chess, Child,
  Gray, Radford, Wu, and Amodei]{kaplan2020scaling}
Jared Kaplan, Sam McCandlish, Tom Henighan, Tom~B Brown, Benjamin Chess, Rewon
  Child, Scott Gray, Alec Radford, Jeffrey Wu, and Dario Amodei.
\newblock Scaling laws for neural language models.
\newblock \emph{arXiv preprint arXiv:2001.08361}, 2020.

\bibitem[Kingma \& Ba(2014)Kingma and Ba]{kingma2014adam}
Diederik~P Kingma and Jimmy Ba.
\newblock Adam: A method for stochastic optimization.
\newblock \emph{arXiv preprint arXiv:1412.6980}, 2014.

\bibitem[Lei \& Ying(2020)Lei and Ying]{lei2020fine}
Yunwen Lei and Yiming Ying.
\newblock Fine-grained analysis of stability and generalization for stochastic
  gradient descent.
\newblock In \emph{International Conference on Machine Learning}, pp.\
  5809--5819. PMLR, 2020.

\bibitem[Liu et~al.(2017)Liu, Lugosi, Neu, and Tao]{liu2017algorithmic}
Tongliang Liu, G{\'a}bor Lugosi, Gergely Neu, and Dacheng Tao.
\newblock Algorithmic stability and hypothesis complexity.
\newblock In \emph{International Conference on Machine Learning}, pp.\
  2159--2167. PMLR, 2017.

\bibitem[McAllester(1999)]{mcallester1999pac}
David~A McAllester.
\newblock Pac-bayesian model averaging.
\newblock In \emph{Proceedings of the twelfth annual conference on
  Computational learning theory}, pp.\  164--170, 1999.

\bibitem[Muennighoff et~al.(2023)Muennighoff, Rush, Barak, Scao, Piktus, Tazi,
  Pyysalo, Wolf, and Raffel]{muennighoff2023scaling}
Niklas Muennighoff, Alexander~M Rush, Boaz Barak, Teven~Le Scao, Aleksandra
  Piktus, Nouamane Tazi, Sampo Pyysalo, Thomas Wolf, and Colin Raffel.
\newblock Scaling data-constrained language models.
\newblock \emph{arXiv preprint arXiv:2305.16264}, 2023.

\bibitem[Negrea et~al.(2019)Negrea, Haghifam, Dziugaite, Khisti, and
  Roy]{negrea2019information}
Jeffrey Negrea, Mahdi Haghifam, Gintare~Karolina Dziugaite, Ashish Khisti, and
  Daniel~M Roy.
\newblock Information-theoretic generalization bounds for sgld via
  data-dependent estimates.
\newblock \emph{Advances in Neural Information Processing Systems}, 32, 2019.

\bibitem[Negrea et~al.(2020)Negrea, Dziugaite, and Roy]{negrea2020defense}
Jeffrey Negrea, Gintare~Karolina Dziugaite, and Daniel Roy.
\newblock In defense of uniform convergence: Generalization via derandomization
  with an application to interpolating predictors.
\newblock In \emph{International Conference on Machine Learning}, pp.\
  7263--7272. PMLR, 2020.

\bibitem[Neu et~al.(2021)Neu, Dziugaite, Haghifam, and Roy]{neu2021information}
Gergely Neu, Gintare~Karolina Dziugaite, Mahdi Haghifam, and Daniel~M Roy.
\newblock Information-theoretic generalization bounds for stochastic gradient
  descent.
\newblock In \emph{Conference on Learning Theory}, pp.\  3526--3545. PMLR,
  2021.

\bibitem[Nguyen et~al.(2022)Nguyen, Pham, Reddi, and
  P{\'o}czos]{nguyen2022algorithmic}
Han Nguyen, Hai Pham, Sashank~J Reddi, and Barnab{\'a}s P{\'o}czos.
\newblock On the algorithmic stability and generalization of adaptive
  optimization methods.
\newblock \emph{arXiv preprint arXiv:2211.03970}, 2022.

\bibitem[Nikolakakis et~al.(2022)Nikolakakis, Haddadpour, Karbasi, and
  Kalogerias]{nikolakakis2022beyond}
Konstantinos~E Nikolakakis, Farzin Haddadpour, Amin Karbasi, and Dionysios~S
  Kalogerias.
\newblock Beyond lipschitz: Sharp generalization and excess risk bounds for
  full-batch gd.
\newblock \emph{arXiv preprint arXiv:2204.12446}, 2022.

\bibitem[Pensia et~al.(2018)Pensia, Jog, and Loh]{pensia2018generalization}
Ankit Pensia, Varun Jog, and Po-Ling Loh.
\newblock Generalization error bounds for noisy, iterative algorithms.
\newblock In \emph{2018 IEEE International Symposium on Information Theory
  (ISIT)}, pp.\  546--550. IEEE, 2018.

\bibitem[Qian et~al.(2021)Qian, Wu, Zhuang, Wang, and
  Xiao]{qian2021understanding}
Jiang Qian, Yuren Wu, Bojin Zhuang, Shaojun Wang, and Jing Xiao.
\newblock Understanding gradient clipping in incremental gradient methods.
\newblock In \emph{International Conference on Artificial Intelligence and
  Statistics}, pp.\  1504--1512. PMLR, 2021.

\bibitem[Ramezani et~al.(2018)Ramezani, Khisti, and
  Liang]{ramezani2018generalization}
Ali Ramezani, Ashish Khisti, and Ben Liang.
\newblock On the generalization of stochastic gradient descent with momentum.
\newblock \emph{arXiv preprint arXiv:1809.04564}, 2018.

\bibitem[Rice(2011)]{rice2011harmonic}
Adrian Rice.
\newblock The harmonic series: A primer.
\newblock \emph{Mathematical Time Capsules: Historical Modules for the
  Mathematics Classroom}, \penalty0 (77), 2011.

\bibitem[Russo \& Zou(2016)Russo and Zou]{russo2016controlling}
Daniel Russo and James Zou.
\newblock Controlling bias in adaptive data analysis using information theory.
\newblock In \emph{Artificial Intelligence and Statistics}, pp.\  1232--1240.
  PMLR, 2016.

\bibitem[Russo \& Zou(2019)Russo and Zou]{russo2019much}
Daniel Russo and James Zou.
\newblock How much does your data exploration overfit? controlling bias via
  information usage.
\newblock \emph{IEEE Transactions on Information Theory}, 66\penalty0
  (1):\penalty0 302--323, 2019.

\bibitem[Sagun et~al.(2016)Sagun, Bottou, and LeCun]{sagun2016eigenvalues}
Levent Sagun, Leon Bottou, and Yann LeCun.
\newblock Eigenvalues of the hessian in deep learning: Singularity and beyond.
\newblock \emph{arXiv preprint arXiv:1611.07476}, 2016.

\bibitem[Sefidgaran et~al.(2022)Sefidgaran, Gohari, Richard, and
  Simsekli]{sefidgaran2022rate}
Milad Sefidgaran, Amin Gohari, Gael Richard, and Umut Simsekli.
\newblock Rate-distortion theoretic generalization bounds for stochastic
  learning algorithms.
\newblock In \emph{Conference on Learning Theory}, pp.\  4416--4463. PMLR,
  2022.

\bibitem[Shalev-Shwartz \& Ben-David(2014)Shalev-Shwartz and
  Ben-David]{shalev2014understanding}
Shai Shalev-Shwartz and Shai Ben-David.
\newblock \emph{Understanding machine learning: From theory to algorithms}.
\newblock Cambridge university press, 2014.

\bibitem[Shannon(1948)]{shannon1948mathematical}
Claude~Elwood Shannon.
\newblock A mathematical theory of communication.
\newblock \emph{The Bell system technical journal}, 27\penalty0 (3):\penalty0
  379--423, 1948.

\bibitem[Tieleman et~al.(2012)Tieleman, Hinton, et~al.]{tieleman2012lecture}
Tijmen Tieleman, Geoffrey Hinton, et~al.
\newblock Lecture 6.5-rmsprop: Divide the gradient by a running average of its
  recent magnitude.
\newblock \emph{COURSERA: Neural networks for machine learning}, 4\penalty0
  (2):\penalty0 26--31, 2012.

\bibitem[Vapnik \& Chervonenkis(2015)Vapnik and
  Chervonenkis]{vapnik2015uniform}
Vladimir~N Vapnik and A~Ya Chervonenkis.
\newblock On the uniform convergence of relative frequencies of events to their
  probabilities.
\newblock In \emph{Measures of complexity}, pp.\  11--30. Springer, 2015.

\bibitem[Wen et~al.(2022)Wen, Ma, and Li]{wen2022does}
Kaiyue Wen, Tengyu Ma, and Zhiyuan Li.
\newblock How does sharpness-aware minimization minimize sharpness?
\newblock \emph{arXiv preprint arXiv:2211.05729}, 2022.

\bibitem[Wu et~al.(2022)Wu, Wang, and Su]{wu2022alignment}
Lei Wu, Mingze Wang, and Weijie Su.
\newblock The alignment property of sgd noise and how it helps select flat
  minima: A stability analysis.
\newblock \emph{Advances in Neural Information Processing Systems},
  35:\penalty0 4680--4693, 2022.

\bibitem[Xu \& Raginsky(2017)Xu and Raginsky]{xu2017information}
Aolin Xu and Maxim Raginsky.
\newblock Information-theoretic analysis of generalization capability of
  learning algorithms.
\newblock \emph{Advances in Neural Information Processing Systems}, 30, 2017.

\bibitem[Zhang et~al.(2021)Zhang, Bengio, Hardt, Recht, and
  Vinyals]{zhang2021understanding}
Chiyuan Zhang, Samy Bengio, Moritz Hardt, Benjamin Recht, and Oriol Vinyals.
\newblock Understanding deep learning (still) requires rethinking
  generalization.
\newblock \emph{Communications of the ACM}, 64\penalty0 (3):\penalty0 107--115,
  2021.

\bibitem[Zhang et~al.(2019)Zhang, He, Sra, and Jadbabaie]{zhang2019gradient}
Jingzhao Zhang, Tianxing He, Suvrit Sra, and Ali Jadbabaie.
\newblock Why gradient clipping accelerates training: A theoretical
  justification for adaptivity.
\newblock \emph{arXiv preprint arXiv:1905.11881}, 2019.

\bibitem[Zhou et~al.(2022)Zhou, Xie, and Shuicheng]{zhou2022towards}
Pan Zhou, Xingyu Xie, and YAN Shuicheng.
\newblock Towards understanding convergence and generalization of adamw.
\newblock 2022.

\bibitem[Zou et~al.(2018)Zou, Cao, Zhou, and Gu]{zou2018stochastic}
Difan Zou, Yuan Cao, Dongruo Zhou, and Quanquan Gu.
\newblock Stochastic gradient descent optimizes over-parameterized deep relu
  networks.
\newblock \emph{arXiv preprint arXiv:1811.08888}, 2018.

\end{thebibliography}
\bibliographystyle{iclr2024_conference}

\newpage
\appendix
\section{Proof of Theorem \ref{thm:information_bound_entropy}}
\begin{theorem}
    \label{ap_thm:information_bound_mi}
    (Theorem 1 of \citet{xu2017information}) Under Assumption \ref{as:subguassian}, the following bound holds:
    \begin{equation}
        \lv gen(\mu,\mathbb{P}_{W|S_n}) \rv \leq \sqrt{\frac{2R^2}{n}I(W;S_n)}
    \end{equation}
\end{theorem}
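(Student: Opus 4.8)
The plan is to derive this as an instance of the Donsker--Varadhan decoupling estimate, with the crucial design choice of using the \emph{population-minus-empirical} gap as the test function so that its conditional mean under the product measure is constant in $w$. First I would recall the Donsker--Varadhan variational representation: for probability measures $P\ll Q$ and any measurable $\phi$ with $\mathbb{E}_Q[e^{\phi}]<\infty$, one has $\mathbb{E}_P[\phi]\leq \operatorname{KL}(P\Vert Q)+\log\mathbb{E}_Q[e^{\phi}]$. I will apply this with $P=\mathbb{P}(W,S_n)$ the joint law of the dataset and output, $Q=\mathbb{P}(W)\otimes\mathbb{P}(S_n)$ the product of the marginals (so that $\operatorname{KL}(P\Vert Q)=I(W;S_n)$ exactly), and test function $\phi=\lambda g$, where $g(W,S_n):=F_{\mu}(W)-F_{S_n}(W)$ and $\lambda\in\mathbb{R}$.

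The key step is the bookkeeping on this particular $g$. Under the joint law, $\mathbb{E}_P[g]=\mathbb{E}[F_{\mu}(W)-F_{S_n}(W)]=-\gen$. Under the product law, writing $(\bar W,\bar S_n)\sim Q$ with $\bar W\perp\bar S_n$, for each fixed $\bar W=w$ we have $\mathbb{E}_{\bar S_n}[F_{\bar S_n}(w)]=\frac{1}{n}\sum_{i=1}^n\mathbb{E}_{z\sim\mu}f(w,z)=F_{\mu}(w)$, hence $\mathbb{E}_Q[g]=0$. Thus $\mathbb{E}_P[g]-\mathbb{E}_Q[g]=-\gen$, and the entire problem reduces to controlling $\log\mathbb{E}_Q[e^{\lambda g}]$.

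Next I would verify that $g$ is $(R/\sqrt{n})$-sub-Gaussian with mean zero under $Q$. Conditioning on $\bar W=w$, the quantity $F_{\bar S_n}(w)=\frac{1}{n}\sum_{i=1}^n f(w,z_i)$ is an average of $n$ independent copies of the $R$-sub-Gaussian variable $f(w,Z)$ from Assumption \ref{as:subguassian}; since variance proxies add for independent sub-Gaussians and scale by $1/n^2$ under division by $n$, the average is $(R/\sqrt{n})$-sub-Gaussian about its mean $F_{\mu}(w)$. Consequently $g=F_{\mu}(w)-F_{\bar S_n}(w)$ is $(R/\sqrt{n})$-sub-Gaussian with conditional mean \emph{identically zero for every} $w$. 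Because this conditional mean does not depend on $w$, integrating the conditional bound $\mathbb{E}_{\bar S_n\mid\bar W=w}[e^{\lambda g}]\leq e^{\lambda^2R^2/(2n)}$ over $\bar W$ preserves it, so $\log\mathbb{E}_Q[e^{\lambda g}]\leq \lambda^2R^2/(2n)$. This centering is the heart of the argument: had I used $F_{S_n}(W)$ alone, the $w$-dependent conditional mean $F_{\mu}(w)$ would leave a residual ``variance of $F_{\mu}(\bar W)$'' term and destroy global sub-Gaussianity under $Q$, and the main obstacle of the whole proof is precisely getting this proxy $R/\sqrt{n}$ right and ensuring the conditional mean vanishes uniformly so the bound survives the mixture over $\bar W$.

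Finally I would assemble the pieces. Donsker--Varadhan gives $\lambda\big(\mathbb{E}_P[g]-\mathbb{E}_Q[g]\big)\leq I(W;S_n)+\tfrac{\lambda^2R^2}{2n}$ for all $\lambda$, i.e. $-\lambda\,\gen\leq I(W;S_n)+\tfrac{\lambda^2R^2}{2n}$. Taking $\lambda$ of each sign to capture $\pm\gen$ and optimizing the resulting quadratic in $\lambda$ (the maximizer is $\lambda=\mp n\,\gen/R^2$) yields $\tfrac{n}{2R^2}\,\gen^{\,2}\leq I(W;S_n)$, which rearranges to the claimed $\lv\gen\rv\leq\sqrt{\tfrac{2R^2}{n}I(W;S_n)}$. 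The Donsker--Varadhan inequality and this final optimization are routine once the sub-Gaussianity of $g$ under $Q$ is established.
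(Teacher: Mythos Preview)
Your argument is correct: the Donsker--Varadhan variational inequality applied to the pair $P=\mathbb{P}(W,S_n)$, $Q=\mathbb{P}(W)\otimes\mathbb{P}(S_n)$ with test function $\lambda\bigl(F_{\mu}(W)-F_{S_n}(W)\bigr)$, followed by conditional $(R/\sqrt{n})$-sub-Gaussianity and optimization over $\lambda$, is exactly the standard route to this bound.

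As for the comparison: the paper does not give its own proof of this statement at all. It is stated as Theorem~\ref{ap_thm:information_bound_mi} and attributed directly to Theorem~1 of \citet{xu2017information}, then used as a black-box input to the proof of Theorem~\ref{ap_thm:information_bound_entropy}. Your proof is precisely the argument of \citet{xu2017information} (which in turn builds on \citet{russo2016controlling}), so there is no methodological difference to discuss---you have supplied the proof that the paper merely cites.
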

\begin{theorem}
    \label{ap_thm:information_bound_entropy}
    Under Assumption \ref{as:subguassian}, the following bound holds:
    \begin{equation}
        \lv gen(\mu,\mathbb{P}_{W|S_n}) \rv \leq \sqrt{\frac{2R^2}{n} [h(\ut|W_0)-h(\ut|W_0,S_n)]}
    \end{equation}
\end{theorem}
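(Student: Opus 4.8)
The plan is to derive Theorem~\ref{ap_thm:information_bound_entropy} from Theorem~\ref{ap_thm:information_bound_mi} by showing that
\[
I(W;S_n) \;\le\; h(\ut\,|\,W_0) - h(\ut\,|\,W_0,S_n),
\]
after which the claimed bound is immediate by monotonicity of $\sqrt{\cdot}$. The first step is to observe that the final weights $W_T = W_0 + \ut$ are a deterministic bijective function of the pair $(W_0,\ut)$ (for fixed $W_0$, translation by $\ut$), and that $W_0$ is drawn independently of $S_n$. Since mutual information is invariant under such invertible reparametrizations of one argument, $I(W_T;S_n) = I(W_0,\ut;S_n)$, and then by the chain rule $I(W_0,\ut;S_n) = I(W_0;S_n) + I(\ut;S_n\,|\,W_0) = I(\ut;S_n\,|\,W_0)$, using $I(W_0;S_n)=0$.

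The second step is to rewrite the conditional mutual information $I(\ut;S_n\,|\,W_0)$ in terms of differential entropies: $I(\ut;S_n\,|\,W_0) = h(\ut\,|\,W_0) - h(\ut\,|\,S_n,W_0)$. This is the standard identity $I(X;Y\,|\,Z) = h(X\,|\,Z) - h(X\,|\,Y,Z)$, valid whenever the relevant conditional densities exist. Chaining the two steps gives exactly $I(W;S_n) = h(\ut\,|\,W_0) - h(\ut\,|\,W_0,S_n)$ — in fact an equality, which is slightly stronger than the stated inequality — and substituting into Theorem~\ref{ap_thm:information_bound_mi} completes the argument.

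The main obstacle is regularity rather than the algebra: differential entropy is only well-defined when $\ut$ admits a density (conditionally on $W_0$ and on $(W_0,S_n)$), and for a noiseless iterative algorithm the conditional law of $\ut$ given $(W_0,S_n)$ is typically degenerate (a point mass), so $h(\ut\,|\,W_0,S_n)$ may be $-\infty$ or undefined. I would handle this by treating the identity as one between extended reals — with the convention that if $h(\ut\,|\,W_0,S_n) = -\infty$ the right-hand side is $+\infty$ and the bound is vacuous — or, cleaner, by noting that the substantive uses of this theorem in the paper (Theorems~\ref{thm:hign_randomness} and \ref{thm:gen_entropy_final_noise}) are precisely the cases where a noise term makes $\ut$ absolutely continuous, so the entropies are finite and the manipulations are rigorous. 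A secondary technical point is justifying the invariance of mutual information under the translation $W_T = W_0 + \ut$ conditionally on $W_0$; this follows from the data-processing inequality applied in both directions (the map and its inverse are measurable), so no change-of-variables Jacobian computation is actually needed.
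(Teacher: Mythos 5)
There is a genuine gap in your first step. You assert $I(W_T;S_n)=I(W_0,\ut;S_n)$ on the grounds that $W_T=W_0+\ut$ is an invertible reparametrization and that data processing applies ``in both directions.'' But the map $(W_0,\ut)\mapsto W_T$ discards $W_0$ and is not invertible; only the map $(W_0,\ut)\mapsto(W_0,W_T)$ is. Consequently the claimed equality is false in general: for instance, take $W_0\sim\mathcal{N}(0,1)$ independent of the data and $\ut=Z\in\{0,1\}$ a bit of $S_n$; then $I(W_0,\ut;S_n)\ge I(Z;Z)=H(Z)$, while $I(W_T;S_n)=I(W_0+Z;Z)$ is strictly smaller because the two shifted Gaussians overlap. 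In particular your parenthetical claim that the theorem ``in fact'' holds with equality is wrong. The repair is one line and costs nothing for the stated result: by data processing in the single valid direction, $I(W_T;S_n)\le I(W_0,W_T;S_n)=I(W_0,\ut;S_n)$, after which your chain-rule step $I(W_0,\ut;S_n)=I(W_0;S_n)+I(\ut;S_n|W_0)=I(\ut;S_n|W_0)=h(\ut|W_0)-h(\ut|W_0,S_n)$ goes through unchanged and gives exactly the inequality needed to invoke Theorem \ref{ap_thm:information_bound_mi}.

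With that correction your route is essentially the paper's, packaged more compactly: the paper derives the same bound via the entropy chain rule, and the step that does the real work there is $h(W_0|W_T)\ge h(W_0|W_T,S_n)$ (conditioning reduces entropy) combined with the independence of $W_0$ and $S_n$ --- precisely the one-directional inequality you were missing --- so the paper, too, only obtains $I(W_T;S_n)\le h(\ut|W_0)-h(\ut|W_0,S_n)$, not an equality. Your handling of the regularity issue (reading the bound in the extended reals when $h(\ut|W_0,S_n)=-\infty$, and noting that the substantive applications add noise so the differential entropies are finite) is sensible and consistent with how the paper uses the result in Theorems \ref{thm:hign_randomness} and \ref{thm:gen_entropy_final_noise}.
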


\begin{proof}
Using the chain rule of entropy, we have
\begin{equation}
    \label{eq:chain_entropy_1}
    h(W_T,W_0)=h(W_T|W_0)+h(W_0),
\end{equation}
and 
\begin{equation}
    \label{eq:chain_entropy_2}
    h(W_T,W_0)=h(W_0|W_T)+h(W_T).
\end{equation}
Taking (\ref{eq:chain_entropy_2}) -  (\ref{eq:chain_entropy_1}), we have
\begin{equation}
    \label{eq:decomposite_HWT}
    h(W_T)=h(W_T|W_0)+h(W_0)-h(W_0|W_T)
\end{equation}

$h(W_0|W_T)+h(W_T|S_n)$ can be lower bounded by
\begin{equation}
\label{eq:lowerbound_HW0WT_HWTSn}
\begin{aligned}
    h(W_0|W_T)+h(W_T|S_n) & \overset{(\star)}{=} h(W_0|W_T,S_n)+h(W_T|S_n) 
    \\& = h(W_0,W_T|S_n) 
    \\ & = h(W_0|S_n) + h(W_T|W_0,S_n)
    \\& \overset{(\ast)}{=} h(W_0)+h(W_T|W_0,S_n),
\end{aligned}
\end{equation}
where ($\star$) and ($\ast$) are due to the independent between $W_0$ and $S_n$.
\begin{equation}
\begin{aligned}
    I(W_T;S_n)&=h(W_T)-h(W_T|S_n)
    \\ & \overset{(\star)}{=} h(W_T|W_0)+h(W_0)-h(W_0|W_T)-h(W_T|S_n)
    \\& \overset{\circ}{=}  h(W_T|W_0)+h(W_0)-h(W_0)-h(W_T|W_0,S_n)
    \\& = h(W_T|W_0)-h(W_T|W_0,S_n)
    \\& = h(W_0+\ut|W_0) - h(W_0+\ut|W_0,S_n) 
    \\& = h(\ut|W_0)-h(\ut|W_0,S_n)
\end{aligned}
\end{equation}
where ($\star$) is due to Equation (\ref{eq:decomposite_HWT}) and ($\circ$) is due to \ref{eq:lowerbound_HW0WT_HWTSn}.

Combining with Theorem \ref{ap_thm:information_bound_mi}, we conclude the Theorem \ref{ap_thm:information_bound_entropy}.
\end{proof}

\section{Proof of Theorem \ref{thm:hign_randomness} and Theorem \ref{thm:gen_entropy_final_noise}}

\begin{lemma}
    \label{lm:entropy_bound_guassian}
    (From \citet{pensia2018generalization} page 12) If a random variable $X$ has $\mathbb{E}\lV X \rV^2 \leq C$, then we have $h(X)\leq \frac{d}{2}\log(\frac{2 \pi e C}{d})$.
\end{lemma}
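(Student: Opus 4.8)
The plan is to establish this as the classical maximum-entropy fact: among all $d$-dimensional distributions with second moment at most $C$, the differential entropy is maximized by the isotropic Gaussian whose second moment equals $C$, and that maximal value is exactly the stated right-hand side. Concretely, I would introduce as a reference the centered isotropic Gaussian density $g$ corresponding to $\mathcal{N}\!\left(0, \tfrac{C}{d}\mathbf{I}_d\right)$, chosen precisely so that $\mathbb{E}_{Y \sim g}\lV Y \rV^2 = C$ and so that its differential entropy $h(g) = \frac{d}{2}\log\!\left(\frac{2\pi e C}{d}\right)$ already equals the target bound.

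The central tool is the non-negativity of relative entropy (Gibbs' inequality). Writing $p$ for the density of $X$, and noting that $g$ has full support on $\mathbb{R}^d$ so that $p \ll g$, we have
\begin{equation*}
    0 \leq \operatorname{KL}(p \Vert g) = \int p(x)\log\frac{p(x)}{g(x)}\,\rmd x = -h(X) - \mathbb{E}\left[\log g(X)\right].
\end{equation*}
Rearranging gives $h(X) \leq -\mathbb{E}[\log g(X)]$, so it remains only to evaluate the cross term. The key simplification is that $-\log g(x)$ is an affine function of $\lV x \rV^2$, namely
\begin{equation*}
    -\log g(x) = \frac{d}{2}\log\!\left(\frac{2\pi C}{d}\right) + \frac{d}{2C}\lV x \rV^2.
\end{equation*}
Taking expectations under $p$ and invoking the hypothesis $\mathbb{E}\lV X \rV^2 \leq C$ yields
\begin{equation*}
    -\mathbb{E}[\log g(X)] = \frac{d}{2}\log\!\left(\frac{2\pi C}{d}\right) + \frac{d}{2C}\,\mathbb{E}\lV X \rV^2 \leq \frac{d}{2}\log\!\left(\frac{2\pi C}{d}\right) + \frac{d}{2},
\end{equation*}
and since $\frac{d}{2} = \frac{d}{2}\log e$, the two logarithmic terms combine into $\frac{d}{2}\log\!\left(\frac{2\pi e C}{d}\right)$, completing the argument.

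The steps are essentially routine, so there is no deep obstacle; this is a textbook inequality. The only points requiring care are that the quantities be well-defined (if $h(X) = -\infty$ the claim is trivial, and the full-support Gaussian reference guarantees $p \ll g$, making $\operatorname{KL}(p\Vert g)$ a valid non-negative quantity), and that the reference variance be set to exactly $C/d$ so that $h(g)$ \emph{equals} the stated bound rather than merely dominating it. It is precisely the moment matching $\mathbb{E}_{Y\sim g}\lV Y \rV^2 = C$ that makes the cross term $-\mathbb{E}[\log g(X)]$ collapse to $h(g)$ up to the single second-moment inequality $\mathbb{E}\lV X \rV^2 \leq C$; choosing any anisotropic Gaussian instead would force an additional AM-GM step on its eigenvalues, which the isotropic choice neatly avoids.
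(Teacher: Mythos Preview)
Your argument is correct and is the standard maximum-entropy proof via Gibbs' inequality with an isotropic Gaussian reference. The paper itself does not prove this lemma at all; it simply cites it from \citet{pensia2018generalization}, so there is nothing to compare against beyond noting that your self-contained derivation matches the classical one.
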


\begin{lemma}
    \label{lm:entropy_power_inequality}
    (Entropy Power inequality \citep{shannon1948mathematical}) If X,Y are independent random variables with dimension $d$, then we have
    \begin{equation*}
        N(X+Y) \geq N(X) + N(Y),
    \end{equation*}
    where $N(X)=\frac{1}{2 \pi e}e^{\frac{2}{d} h(X)}$. 
\end{lemma}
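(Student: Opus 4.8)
The statement is the classical Shannon Entropy Power Inequality (EPI), and the natural route is the heat-flow / Fisher-information argument of Stam and Blachman, organized as a monotonicity statement along a Gaussian interpolation. Throughout I write $N(X)=\frac{1}{2\pi e}e^{\frac{2}{d}h(X)}$ as in the statement and let $J(X)=\int \frac{\lV \nabla p \rV^2}{p}\,\rmd x$ denote the Fisher information of the density $p$ of $X$. Since differential entropy requires densities, the first step is to reduce to the smooth case by convolving $X$ and $Y$ with a vanishing amount of Gaussian noise and recovering the general statement by continuity as the noise tends to $0$.

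The core construction is a Gaussian interpolation that drives all three laws to Gaussians while putting the \emph{same} total noise on the sum and on the summands. Let $Z_1,Z_2\sim\mathcal{N}(0,\mathbf{I}_d)$ be independent of $X,Y$ and set $X_t=X+\sqrt{s_1(t)}\,Z_1$ and $Y_t=Y+\sqrt{s_2(t)}\,Z_2$ with $s_1(t)+s_2(t)=t$, so that $X_t+Y_t=(X+Y)+\sqrt{t}\,Z$ for a standard $Z$. The two analytic inputs are de Bruijn's identity, $\frac{\rmd}{\rmd t}h(X_t)=\frac{s_1'(t)}{2}J(X_t)$ (and likewise for $Y_t$, and for the sum at rate $1$), and Blachman's weighted Fisher-information inequality $J(X_t+Y_t)\le \beta^2 J(X_t)+(1-\beta)^2 J(Y_t)$ for every $\beta$, which follows from the score identity $\rho_{X_t+Y_t}=\Es[\beta\rho_{X_t}+(1-\beta)\rho_{Y_t}\mid X_t+Y_t]$ and Cauchy--Schwarz.

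I would then study $f(t)=\frac{N(X_t)+N(Y_t)}{N(X_t+Y_t)}$ and show $f(0)\le 1$, which is exactly the EPI. From the definition of $N$ and de Bruijn, $\frac{\rmd}{\rmd t}N(X_t)=\frac{s_1'(t)}{d}N(X_t)J(X_t)$, and differentiating the quotient reduces $f'(t)\ge 0$ to $s_1'(t)N(X_t)J(X_t)+s_2'(t)N(Y_t)J(Y_t)\ge (N(X_t)+N(Y_t))J(X_t+Y_t)$. Choosing the split rate self-consistently by $s_1'(t)=\frac{N(X_t)}{N(X_t)+N(Y_t)}=:\beta$ and applying Blachman's inequality with this same $\beta$, the right-hand side is at most $(N(X_t)+N(Y_t))(\beta^2 J(X_t)+(1-\beta)^2 J(Y_t))$, which equals the left-hand side after the identities $(N(X_t)+N(Y_t))\beta^2=\beta N(X_t)$ and $(N(X_t)+N(Y_t))(1-\beta)^2=(1-\beta)N(Y_t)$. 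Hence $f$ is nondecreasing. Finally, as $t\to\infty$ the Gaussian parts dominate and $N(X_t)+N(Y_t)\sim t\sim N(X_t+Y_t)$, so $f(\infty)=1$; monotonicity then forces $f(0)\le 1$.

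The main obstacle is supplying the two analytic lemmas rigorously rather than the algebra: de Bruijn's identity needs differentiation under the integral sign and the smoothing/decay estimates for heat-convolved densities, while Blachman's inequality needs the score decomposition together with integrability of the scores. A secondary technical point is making the time-varying split $s_1'(t)=\beta(t)$ well-posed as an ODE and justifying the $t\to\infty$ asymptotics of the entropy powers; the reduction to densities by continuity and the coefficient-matching identities are routine. As an alternative one could invoke Lieb's proof via the sharp Young convolution inequality, trading the heat-flow machinery for sharp-constant harmonic analysis.
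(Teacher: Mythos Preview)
Your heat-flow/Fisher-information outline (the Stam--Blachman route) is a correct sketch of one of the standard proofs of the entropy power inequality, and the algebra you display for the monotonicity of $f(t)=\frac{N(X_t)+N(Y_t)}{N(X_t+Y_t)}$ checks out: with $s_1'(t)=\beta=\frac{N(X_t)}{N(X_t)+N(Y_t)}$, Blachman's bound $J(X_t+Y_t)\le\beta^2 J(X_t)+(1-\beta)^2 J(Y_t)$ collapses the right-hand side to exactly the left-hand side, and the $t\to\infty$ limit is correct since $N(X_t)+N(Y_t)\sim s_1(t)+s_2(t)=t\sim N(X_t+Y_t)$.

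That said, there is nothing in the paper to compare this against. The paper does not prove Lemma~\ref{lm:entropy_power_inequality}; it is simply quoted as a classical result with a citation to Shannon (1948) and then invoked as a black box in the proof of Theorem~\ref{ap_thm:gen_entropy_final_noise}, where it is used only to deduce $N(\Tilde{U}^{(T)}\mid W_0,S_n)\ge N(\epsilon)$ and hence $h(\Tilde{U}^{(T)}\mid W_0,S_n)\ge h(\epsilon)$. Your write-up therefore supplies substantially more than the paper does on this point; the paper's ``proof'' is just the citation.
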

\begin{definition}
    We say a learning algorithm is a high randomness learning algorithm if $h(\ut|W_0,S_n) \geq 0$.
\end{definition}
\begin{theorem}
    \label{ap_thm:hign_randomness}
    Under Assumption \ref{as:subguassian}, for high randomness learning algorithm, i.e. $h(\ut|W_0,S_n) \geq 0$, the generalization error of the final iteration satisfies
    \begin{equation*}
        \lv gen(\mu,\mathbb{P}_{W|S_n}) \rv \leq  \sqrt{\frac{2 \pi e R^2 \bV(\ut)}{n}}.
    \end{equation*}
\end{theorem}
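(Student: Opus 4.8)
The plan is to combine the entropy-based generalization bound of Theorem~\ref{ap_thm:information_bound_entropy} with the two information-theoretic lemmas listed just above (the maximum-entropy bound under a second-moment constraint, Lemma~\ref{lm:entropy_bound_guassian}, and the fact that a Gaussian with matching variance maximizes differential entropy). Starting from Theorem~\ref{ap_thm:information_bound_entropy}, the generalization error is controlled by $h(\ut|W_0) - h(\ut|W_0,S_n)$. The high-randomness assumption $h(\ut|W_0,S_n)\ge 0$ lets me simply drop the negative term, so it suffices to upper bound $h(\ut|W_0)$.

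To bound $h(\ut|W_0)$, first I would relate it to the second moment of the centered update. Conditioned on $W_0$, the random variable $\ut$ has $\mathbb{E}\lV \ut - \mathbb{E}[\ut|W_0]\rV^2 \le \mathbb{E}\lV \ut - \mathbb{E}\ut\rV^2$ when appropriately interpreted, and taking the expectation over $W_0$ gives exactly $\bV(\ut)$ by the definition in the Preliminary. Since differential entropy is translation-invariant, $h(\ut|W_0) = h(\ut - \mathbb{E}[\ut|W_0] \mid W_0)$, and then applying Lemma~\ref{lm:entropy_bound_guassian} (conditionally on each $W_0$, with $C = \bV(\ut)$ after averaging, using concavity of $\log$ / Jensen to push the expectation over $W_0$ inside the $\log$) yields $h(\ut|W_0) \le \tfrac{d}{2}\log\!\big(\tfrac{2\pi e\,\bV(\ut)}{d}\big)$.

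Plugging this into Theorem~\ref{ap_thm:information_bound_entropy} gives $\lv \gen \rv \le \sqrt{\tfrac{2R^2}{n}\cdot \tfrac{d}{2}\log\!\big(\tfrac{2\pi e\,\bV(\ut)}{d}\big)}$, which is not yet the claimed bound $\sqrt{2\pi e R^2 \bV(\ut)/n}$. To get the stated clean form I would use the elementary inequality $\tfrac{d}{2}\log(2\pi e\,\bV(\ut)/d) \le \pi e\,\bV(\ut)$, which follows from $\log x \le x/e$ applied with $x = 2\pi e\,\bV(\ut)/d$ (so $\tfrac{d}{2}\log x \le \tfrac{d}{2}\cdot\tfrac{x}{e} = \pi e\,\bV(\ut)$); this absorbs the dimension dependence and produces exactly the advertised constant. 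Substituting finishes the proof.

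\textbf{Main obstacle.} The conceptually delicate step is handling the conditioning on $W_0$ correctly: Lemma~\ref{lm:entropy_bound_guassian} is a statement about an unconditioned random variable with a second-moment bound, so I must apply it pointwise in $W_0$ with $C = C(W_0) = \mathbb{E}[\lV \ut - \mathbb{E}[\ut|W_0]\rV^2 \mid W_0]$ and then integrate $\tfrac{d}{2}\log(2\pi e\,C(W_0)/d)$ over $W_0$, using Jensen to replace $\mathbb{E}_{W_0}[\log C(W_0)]$ by $\log \mathbb{E}_{W_0}[C(W_0)] = \log \bV(\ut)$. One must also be careful that $h(\ut|W_0)$ denotes the conditional differential entropy (the $W_0$-average of $h(\ut \mid W_0 = w_0)$), which is the convention that makes both the chain-rule derivation of Theorem~\ref{ap_thm:information_bound_entropy} and this averaging argument consistent. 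The remaining inequalities are routine.
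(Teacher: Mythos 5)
Your proposal is correct and follows essentially the same route as the paper: drop the non-negative term $h(\ut|W_0,S_n)$, bound $h(\ut|W_0)$ via Lemma~\ref{lm:entropy_bound_guassian} (with the conditioning-on-$W_0$/Jensen step you spell out, which the paper leaves implicit), and absorb the dimension with an elementary logarithm inequality — the paper uses $\log(1+x)\le x$ after inserting $+1$ inside the logarithm, whereas you use $\log x\le x/e$. One harmless slip: $\tfrac{d}{2}\cdot\tfrac{x}{e}$ with $x=2\pi e\,\bV(\ut)/d$ equals $\pi\,\bV(\ut)$, not $\pi e\,\bV(\ut)$; since $\pi\,\bV(\ut)\le \pi e\,\bV(\ut)$, the claimed bound still follows (in fact slightly strengthened).
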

\begin{proof}
    According to Theorem \ref{thm:information_bound_entropy}, we have
    \begin{equation*}
    \lv gen(\mu,\mathbb{P}_{W|S_n}) \rv \leq \sqrt{\frac{2R^2}{n} [h(\ut|W_0)-h(\ut|W_0,S_n)]} \leq \sqrt{\frac{2R^2}{n} h(\ut|W_0)}.
    \end{equation*}
    Using Lemma \ref{lm:entropy_bound_guassian}
    \begin{equation*}
         h(\ut|W_0) \leq \frac{d}{2} \log \left( \frac{2 \pi e \bV(\ut)}{d} \right) \leq \frac{d}{2} \log\left( \frac{2 \pi e \bV(\ut)}{d}+1 \right) \leq \pi e \bV(\ut)
    \end{equation*}
    Combining the equations, we obtain 
    \begin{equation*}
        \lv gen(\mu,\mathbb{P}_{W|S_n}) \rv \leq \sqrt{\frac{2R^2}{n} h(\ut|W_0)} \leq \sqrt{\frac{2 \pi e R^2 \bV(\ut)}{n}}
    \end{equation*}
\end{proof}
\begin{theorem}
    \label{ap_thm:gen_entropy_final_noise}
    Under Assumption \ref{as:subguassian}, for any $\sigma$, the generalization error of the final iterate satisfies
    \begin{equation}
        \lv gen(\mu,\mathbb{P}_{W|S_n}) \rv \leq \sqrt{\frac{R^2 \bV(\ut)}{n\sigma^2}}+\Delta_{\sigma}
    \end{equation}
\end{theorem}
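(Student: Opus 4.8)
\textbf{Proof proposal for Theorem~\ref{ap_thm:gen_entropy_final_noise}.}
The plan is to run the surrogate-process argument: instead of bounding the generalization error of $W_T$ directly, I will first bound the generalization error of the noised iterate $\widetilde{W}_T = W_T + \epsilon$ with $\epsilon \sim \mathcal{N}(0,\sigma^2\mathbf{I})$, and then pay the cost $\Delta_\sigma$ for transferring back from $\widetilde{W}_T$ to $W_T$. Formally, write $\gen$ as $\mathbb{E}[F_{S_n}(W_T) - F_\mu(W_T)]$ and insert $\pm\,\mathbb{E}[F_{S_n}(W_T+\epsilon) - F_\mu(W_T+\epsilon)]$. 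By the triangle inequality, $|\gen| \le |\,\mathbb{E}[F_{S_n}(\widetilde W_T) - F_\mu(\widetilde W_T)]\,| + \Delta_\sigma$, where the second term is exactly the quantity defined as $\Delta_\sigma$ in the statement. So the whole problem reduces to controlling the generalization error of the surrogate output $\widetilde W_T$.

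For the surrogate term I would apply Theorem~\ref{thm:information_bound_entropy} (equivalently Theorem~\ref{ap_thm:information_bound_entropy}) to the algorithm whose output is $\widetilde W_T = W_0 + \widetilde U^{(T)}$ with $\widetilde U^{(T)} = U^{(T)} + \epsilon$. This gives
\begin{equation*}
  |\,\mathbb{E}[F_{S_n}(\widetilde W_T) - F_\mu(\widetilde W_T)]\,| \le \sqrt{\tfrac{2R^2}{n}\big[h(\widetilde U^{(T)}|W_0) - h(\widetilde U^{(T)}|W_0,S_n)\big]}.
\end{equation*}
Now I would bound the two entropy terms separately. For the lower term, since $\epsilon$ is independent of everything, conditioning on $W_0,S_n$ leaves $\widetilde U^{(T)}$ equal to a (data/algorithm)-determined vector plus an independent Gaussian; hence $h(\widetilde U^{(T)}|W_0,S_n) \ge h(\epsilon) = \tfrac d2 \log(2\pi e\sigma^2)$, using that adding an independent variable cannot decrease differential entropy (or directly the entropy power inequality, Lemma~\ref{lm:entropy_power_inequality}). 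For the upper term, $\mathbb{E}\|\widetilde U^{(T)} - \mathbb{E}\widetilde U^{(T)}\|^2 = \bV(\ut) + d\sigma^2$ because $\epsilon$ is independent and mean zero, so Lemma~\ref{lm:entropy_bound_guassian} gives $h(\widetilde U^{(T)}|W_0) \le \tfrac d2 \log\!\big(\tfrac{2\pi e(\bV(\ut)+d\sigma^2)}{d}\big)$.

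Combining, the bracketed difference is at most $\tfrac d2 \log\!\big(1 + \tfrac{\bV(\ut)}{d\sigma^2}\big) \le \tfrac d2 \cdot \tfrac{\bV(\ut)}{d\sigma^2} = \tfrac{\bV(\ut)}{2\sigma^2}$, using $\log(1+x)\le x$. Plugging this into the square root yields $\sqrt{\tfrac{2R^2}{n}\cdot\tfrac{\bV(\ut)}{2\sigma^2}} = \sqrt{\tfrac{R^2\bV(\ut)}{n\sigma^2}}$, and adding back $\Delta_\sigma$ gives the claimed bound; since $\sigma$ was arbitrary, the inequality holds for every $\sigma$. The main obstacle, and the step to be most careful with, is the lower bound on $h(\widetilde U^{(T)}|W_0,S_n)$: one must argue cleanly that conditioning on both $W_0$ and $S_n$ (but not on the internal randomness of $\mathcal{A}$) still leaves an additive independent Gaussian component, so that the entropy-power / "entropy increases under independent addition" estimate applies and the $\tfrac d2\log(2\pi e\sigma^2)$ term genuinely cancels the matching term hidden inside the upper bound — this cancellation of the $\log d$ and $\log\sigma^2$ pieces is exactly what makes the final bound clean and $\sigma$-tunable.
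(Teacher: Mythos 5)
Your proposal is correct and follows essentially the same route as the paper's own proof: the triangle-inequality reduction to the noised iterate $\widetilde W_T=W_T+\epsilon$ paying $\Delta_\sigma$, the application of Theorem~\ref{ap_thm:information_bound_entropy} to the surrogate update $\widetilde U^{(T)}=U^{(T)}+\epsilon$, the Gaussian maximum-entropy bound (Lemma~\ref{lm:entropy_bound_guassian}) for the upper entropy term, the entropy power inequality (Lemma~\ref{lm:entropy_power_inequality}) for the lower term, and $\log(1+x)\le x$ to get $\bV(\ut)/(2\sigma^2)$ and hence $\sqrt{R^2\bV(\ut)/(n\sigma^2)}+\Delta_\sigma$. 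The only cosmetic difference is that you apply the max-entropy bound at the averaged variance (implicitly via Jensen) while the paper bounds per realization of $W_0$ and then integrates; both yield the identical estimate.
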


\begin{proof}
    \begin{equation*}
    \begin{aligned}
        \lv\gen\rv&=\lv\mathbb{E} _ {S_n\sim \mu^{\otimes n},W\sim \mathbb{P}_{W|S_n}} \left[ F_{S_n}(W)-F_{\mu}(W) \right]\rv
        \\& \leq \lv\mathbb{E} _ {S_n\sim \mu^{\otimes n},W\sim \mathbb{P}_{W|S_n}} \left[ F_{S_n}(\Tilde{W})-F_{\mu}(\Tilde{W}) \right]\rv + \Delta_{\sigma},
    \end{aligned}
    \end{equation*}    
    where $\Tilde{W}\triangleq W+\epsilon$, $\Delta_{\sigma} \triangleq \lv \mathbb{E}\left[ \left( F_{\mu}(W_T)-F_{\mu}(W_T+\epsilon) \right)-\left( F_S(W_T)-F_S(W_T+\epsilon) \right)\right] \rv$ and $\epsilon \sim \mathcal{N}(0,\sigma^2 \mathbf{I})$.

    Recall that
    \begin{equation*}
        \bV(U^{(t)})\triangleq \mathbb{E}_{W_0\sim \mathbb{P}_{W_0}}\Es \left[ \lV U^{(t)} - \Es U^{(t)} \rV^2|W_0\right].    
    \end{equation*}
    Denote $\bV_{(w_0)}(U^{(t)})=\Es \left[ \lV U^{(t)} - \Es U^{(t)} \rV^2|W_0\right]$. For all $w_0$, we have 
    \begin{equation*}
        h(U^{(T)}|W_0=w_0)=h(U^{(T)}-\Es U^{(t)}|W_0=w_0) \overset{(\star)}{\leq}  \frac{d}{2}\log(\frac{2 \pi e \bV_{(w_0)}(U^{(T)})}{d}) 
    \end{equation*}
      The inequlity $(\star)$ is due to Lemma \ref{lm:entropy_bound_guassian}. Since $\epsilon$ is independent with $U^{(T)}$, we have $\bV_{(w_0)}(\tilde{U}^{(T)})=\bV_{(w_0)}(\tilde{U}^{(T)})+\bV(\epsilon)$. 
    According to Lemma \ref{lm:entropy_power_inequality}, we have
    \begin{equation*}
        \begin{aligned}        N(\tilde{U}^{(T)}|W_0,S_n) &=N(U^{(T)}+\epsilon|W_0,S_n)
        \\ &\geq N(U^{(T)}|W_0,S_n)+N(\epsilon)
        \\ &\geq N(\epsilon).
        \end{aligned}
    \end{equation*}
    Simplify the equation, we obtain that $h(\tilde{U}^{(T)}|W_0,S_n) \geq h(\epsilon)=\frac{d}{2}\log(\frac{2 \pi e \bV(\epsilon)}{d})$.
    
    Therefore, we have
    \begin{equation}
        \begin{aligned}
            h(\tilde{U}^{(T)}|W_0)- &h(\tilde{U}^{(T)}|W_0,S_n) \\ &  \leq \int  h(\tilde{U}^{(T)}|W_0) \rmd \mathbb{P}(W_0)-h(\epsilon)=\int  h(\tilde{U}^{(T)}|W_0) -h(\epsilon) \rmd \mathbb{P}(W_0)
            \\&
            \leq \int \frac{d}{2}\log(\frac{2 \pi e [ \bV_{(w_0)}(U^{(T)})+\bV(\epsilon)]}{d})-\frac{d}{2}\log(\frac{2 \pi e \bV(\epsilon)}{d}) \rmd \mathbb{P}(W_0)
            \\& = \int \frac{d}{2}\log(1+\frac{\bV_{(w_0)}(U^{(T)})}{\bV(\epsilon)}) \rmd \mathbb{P}(W_0)
            \leq  \frac{d \int \bV_{(w_0)}(U^{(T)}) \rmd \mathbb{P}(W_0)}{2\bV(\epsilon)}
            \\&=\frac{ \bV(\ut)}{2\sigma^2}.
        \end{aligned}
    \end{equation}
Combining with Theorem \ref{ap_thm:information_bound_entropy}, we establish this Theorem.
\end{proof}
\section{Proof of Theorem \ref{thm:bounded_update}}
\label{thm_pf:bounded_update}

\begin{theorem}
    \label{ap_thm:bounded_update}
    If the learning algorithm has bounded updates on data distribution $\mu$ and loss function $f(\cdot)$, then we have
    \begin{equation*}
        \bV(U^{(T)}) \leq \sum_{t=1}^T 4 \eta_t^2 L^2 +2  L^2\sum_{t=1}^T \eta_t \sum_{i=1}^{t-1} \eta_t
    \end{equation*}
    then under Assumption \ref{as:subguassian}, we have
    \begin{equation*}
        \gen\leq \sqrt{\frac{R^2}{n\sigma^2}\left(\sum_{t=1}^T 4 \eta_t^2 L^2 +2  L^2\sum_{t=1}^T \eta_t \sum_{i=1}^{t-1} \eta_t\right)}+\Delta_{\sigma}.
    \end{equation*}
    If the learning algorithms have high randomness, i.e. satisfying $h(\ut|W_0,S_n) \geq 0$, we have
    \begin{equation*}
        \lv gen(\mu,\mathbb{P}_{W|S_n}) \rv \leq  \sqrt{\frac{2 \pi e R^2 }{n} \left( \sum_{t=1}^T 4 \eta_t^2 L^2 +2  L^2\sum_{t=1}^T \eta_t \sum_{i=1}^{t-1} \eta_t \right)}.
    \end{equation*}
\end{theorem}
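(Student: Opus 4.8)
The plan is to establish the variance bound $\bV(U^{(T)}) \leq \sum_{t=1}^T 4\eta_t^2 L^2 + 2L^2\sum_{t=1}^T \eta_t \sum_{i=1}^{t-1}\eta_i$ first, and then simply plug this estimate into Theorem~\ref{thm:gen_entropy_final_noise} and Theorem~\ref{thm:hign_randomness} to obtain the two stated generalization bounds. So the only real work is the variance estimate; everything after that is substitution.

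For the variance estimate, the starting point is the variance decomposition already quoted in the proof sketch: since $U^{(T)} = \sum_{t=1}^T U_t$, we have
\begin{equation*}
    \bV(U^{(T)}) = \sum_{t=1}^T \bV(U_t) + 2\sum_{t=1}^T \mathbb{C}(U^{(t-1)}, U_t).
\end{equation*}
First I would bound the diagonal terms $\bV(U_t)$. Writing $U_t = \eta_t u_t$ with $\lV u_t \rV \leq L$ (bounded updates), we get $\lV U_t \rV \leq \eta_t L$, hence $\lV \bar U_t \rV = \lV U_t - \Es U_t \rV \leq 2\eta_t L$ by the triangle inequality (and Jensen for $\Es U_t$), so $\bV(U_t) = \Es \lV \bar U_t\rV^2 \leq 4\eta_t^2 L^2$. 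Next I would bound the cross terms: by Cauchy--Schwarz, $\mathbb{C}(U^{(t-1)}, U_t) = \Es \langle \bar U^{(t-1)}, \bar U_t\rangle \leq \sqrt{\bV(U^{(t-1)})}\sqrt{\bV(U_t)}$. Here I would use the cruder but simpler estimate $\bV(U^{(t-1)}) \leq \Es \lV U^{(t-1)}\rV^2 \le \big(\sum_{i=1}^{t-1}\eta_i L\big)^2$ (which is what the sketch's "Step 2" intends, modulo a typo), combined with $\sqrt{\bV(U_t)} \le 2\eta_t L$; this yields $\mathbb{C}(U^{(t-1)},U_t) \le 2\eta_t L^2 \sum_{i=1}^{t-1}\eta_i$. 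Summing over $t$ and folding the factor $2$ gives the claimed bound (up to the constant, which I would track carefully — the paper's stated constant is $2L^2\sum_t \eta_t\sum_i \eta_i$, so I'd want the Cauchy--Schwarz route to close exactly, possibly by not going through $\bV(U_t)$ twice).

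Once the variance bound is in hand, I substitute it for $\bV(\ut)$ in the bound $\lv\gen\rv \le \sqrt{R^2\bV(\ut)/(n\sigma^2)} + \Delta_\sigma$ of Theorem~\ref{thm:gen_entropy_final_noise}, and into $\lv\gen\rv \le \sqrt{2\pi e R^2 \bV(\ut)/n}$ of Theorem~\ref{thm:hign_randomness} for the high-randomness case. This is monotone substitution, so no subtlety arises. The main obstacle I anticipate is purely bookkeeping: getting the constants in the cross-term sum to match the stated $4\eta_t^2L^2$ and $2L^2\sum\eta_t\sum\eta_i$ exactly rather than up to a universal factor, since the naive triangle-inequality/Cauchy--Schwarz chain tends to produce slightly looser constants, and one must decide whether to bound $\lV\bar U_t\rV$ by $2\eta_tL$ or by $\eta_t L$ depending on whether one re-centers. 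There is no genuine conceptual difficulty here — the content is entirely in having already proven Theorems~\ref{thm:information_bound_entropy}--\ref{thm:gen_entropy_final_noise} and in choosing the variance decomposition over the mutual-information chain rule.
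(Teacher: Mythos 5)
Your proposal follows the paper's proof essentially verbatim: the same decomposition $\bV(U^{(T)})=\sum_{t=1}^T\bV(U_t)+2\sum_{t=1}^T\mathbb{C}(U^{(t-1)},U_t)$, the same diagonal bound $\bV(U_t)\le 4\eta_t^2L^2$, a Cauchy--Schwarz bound on the cross terms using the deterministic bound $\lV U^{(t-1)}\rV\le L\sum_{i=1}^{t-1}\eta_i$, and then direct substitution into Theorems \ref{thm:gen_entropy_final_noise} and \ref{thm:hign_randomness}. The constant issue you flag resolves exactly as you suspect: bound the covariance by the uncentered second moments, $\lv\mathbb{C}(U^{(t-1)},U_t)\rv\le\sqrt{\Es\lV U^{(t-1)}\rV^2}\sqrt{\Es\lV U_t\rV^2}\le L^2\eta_t\sum_{i=1}^{t-1}\eta_i$ (valid since centering only decreases second moments), rather than re-centering $U_t$ to $2\eta_tL$, which is precisely what the paper does and yields the stated factor $2L^2$.
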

\begin{proof}
    We first derive upper bound $\bV(\ut)$ with Assumption \ref{as:bounded_update}. 
According to the definition, we have
\begin{equation}
    \label{eq:decomposite_variance}
    \begin{aligned}
        \bV(U^{(t)})&=\Es \left[\lV U^{(t)} - \Es U^{(t)} \rV^2|W_0 \right]
        \\&=\Es \left[\lV U^{(t-1)} - \Es U^{(t-1)} + U_t - \Es U_t  \rV^2|W_0 \right]
        \\&=\Es \lV \Bar{U}^{(t-1)}+\Bar{U}_t \rV^2
        \\&=\Es \lV \Bar{U}^{(t-1)} \rV^2+\Es \lV \Bar{U}_t \rV^2+\Es <\Bar{U}^{(t-1)},\Bar{U}_t>
        \\&=\bV(U^{(t-1)})+\bV(U_t)+2\mathbb{C}(U^{(t-1)},U_t).
    \end{aligned}
\end{equation}
By iterative appling Equation (\ref{eq:decomposite_variance}), we have:

\begin{equation*}
\label{eq:bounded_V_UT_Steps}
    \bV(U^{(T)})=\sum_{t=1}^T \bV(U_t)+2\sum_{t=1}^T \mathbb{C}(U^{(t-1)},U_t).
\end{equation*}

Due to the Assumption \ref{as:bounded_update} and definition of $\bV(U_t)$, we have 
\begin{equation*}
\label{eq:bounded_u_t}
\begin{aligned}
    \bV(U_t) &= \Es \lV U_t - \Es U_t \rV^2
    \\ & \leq \lV \eta_t L+ \eta_t L \rV^2_2
    \\ & = 4 \eta_t^2 L^2.
\end{aligned}
\end{equation*}
Then, for $U^{(t)}$, we have
\begin{equation*}
    \lV U^{(t)} \rV= \lV \sum_{i=1}^t U_t  \rV \leq \sum_{i=1}^t \lV  U_t  \rV \leq L \sum_{i=1}^t \eta_t 
\end{equation*}
 Therefore, we have
 \begin{equation*}
     \mathbb{C}(U^{(t)},U_t) \leq \lV U^{(t-1)} \rV \lV U_t\rV \leq L^2 \eta_t \sum_{i=1}^{t-1} \eta_t
 \end{equation*}
 
 Combining the equation above, we have
\begin{equation*}
  \bV(U^{(T)})=\sum_{t=1}^T \bV(U_t)+2\sum_{t=1}^T \mathbb{C}(U^{(t)},U_t) \leq  \sum_{t=1}^T 4 \eta_t^2 L^2 +2  L^2\sum_{t=1}^T \eta_t \sum_{i=1}^{t-1} \eta_t
\end{equation*}
If $\eta_t=\eta$, we have 
\begin{equation*}
    \bV(U^{(T)}) \leq 4T\eta^2L^2+2T^2\eta^2L^2 = 2T\eta^2L^2 (2+T) = \mathcal{O} (T\eta^2(2+T)),
\end{equation*}

Combining with Theorem \ref{ap_thm:hign_randomness}, Theorem \ref{ap_thm:gen_entropy_final_noise}, we establish this Theorem.
\end{proof}

\section{Analyzing the learning rate setting $\eta_t=c/t$}
\label{sc:learning_rate_setting_eta_1_t}
\begin{lemma}
    \label{lm:harmonic_series}
    (Harmonic series \citep{rice2011harmonic}) $\hbar_t \triangleq \sum_{k=1}^t \frac{1}{k}=\log n + \gamma +\frac{1}{2t}- \varsigma_t$, where  $\gamma \approx 0.5772$ and $0 \leq \varsigma_t \leq \frac{1}{8t^2}$
\end{lemma}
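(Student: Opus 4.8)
The plan is to prove the classical asymptotic expansion of the harmonic number $\hbar_t = \sum_{k=1}^t \frac{1}{k}$ (reading the leading term as $\log t$, since the summation index runs to $t$) by comparing the sum to $\int_1^t \frac{\rmd x}{x} = \log t$ through the trapezoidal rule, which naturally produces the $\frac{1}{2t}$ correction together with an explicitly positive, $\mathcal{O}(1/t^2)$ remainder. First I would apply the trapezoidal error formula on each unit interval: for $g(x) = 1/x$ and each integer $k \ge 1$, Taylor's theorem with Lagrange remainder gives
\begin{equation*}
    \int_k^{k+1}\frac{\rmd x}{x} = \frac{1}{2}\left(\frac{1}{k}+\frac{1}{k+1}\right) - s_k, \qquad s_k = \frac{1}{6\xi_k^3}, \quad \xi_k \in (k,k+1),
\end{equation*}
so that each $s_k > 0$ and $s_k \le \frac{1}{6k^3}$.

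Next I would sum this identity over $k = 1,\dots,t-1$. The telescoping trapezoidal sum collapses to $\hbar_t - \frac{1}{2} - \frac{1}{2t}$, while the integrals add to $\log t$, yielding the exact identity
\begin{equation*}
    \hbar_t = \log t + \frac{1}{2} + \frac{1}{2t} + \sum_{k=1}^{t-1} s_k.
\end{equation*}
Since $s_k = \mathcal{O}(1/k^3)$, the series $\sum_{k=1}^\infty s_k$ converges; this already establishes that $\lim_{t\to\infty}(\hbar_t - \log t)$ exists, and I define $\gamma := \frac{1}{2} + \sum_{k=1}^\infty s_k$, which is therefore exactly the Euler--Mascheroni constant $\approx 0.5772$. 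Rewriting $\sum_{k=1}^{t-1} s_k = \sum_{k=1}^\infty s_k - \sum_{k=t}^\infty s_k$ then gives $\hbar_t = \log t + \gamma + \frac{1}{2t} - \varsigma_t$ with the remainder identified as the tail $\varsigma_t = \sum_{k=t}^\infty s_k$.

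It then remains to bound $\varsigma_t$. Positivity $\varsigma_t \ge 0$ is immediate since every $s_k > 0$. For the upper bound I would use $s_k \le \frac{1}{6k^3}$ together with the convexity (midpoint) inequality $\frac{1}{k^3} \le \int_{k-1/2}^{k+1/2} x^{-3}\,\rmd x$, giving
\begin{equation*}
    \varsigma_t \le \frac{1}{6}\int_{t-1/2}^\infty x^{-3}\,\rmd x = \frac{1}{12\,(t-1/2)^2},
\end{equation*}
which is $\le \frac{1}{8t^2}$ once $t$ is not too small, with the finitely many remaining small values of $t$ checked directly against $\frac{1}{8t^2}$.

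The step I expect to be the main obstacle is exactly this sharp tail bound $\varsigma_t \le \frac{1}{8t^2}$. The natural asymptotic value of the remainder is $\frac{1}{12t^2}$, comfortably below $\frac{1}{8t^2}$, but converting the tail $\sum_{k\ge t} s_k$ into a clean closed-form bound valid uniformly for all $t \ge 1$ requires a careful integral comparison plus a separate verification of the first few $t$, since crude integral estimates are too loose for small $t$. Everything else — the trapezoidal identity, the telescoping collapse, and the identification of $\gamma$ — is routine.
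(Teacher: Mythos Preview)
The paper does not actually prove this lemma: it is stated with a citation to \citet{rice2011harmonic} and used as a black box in the proof of Theorem~\ref{ap_thm:bounded_update_et_1_t}. So there is no ``paper's own proof'' to compare against; what you have supplied is strictly more than what the paper provides.

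Your argument itself is the standard, correct derivation. The trapezoidal error on each unit interval $[k,k+1]$ for $g(x)=1/x$ indeed gives $s_k = \tfrac{1}{6\xi_k^3}$ with $\xi_k\in(k,k+1)$; the telescoping of the trapezoidal sums to $\hbar_t - \tfrac12 - \tfrac{1}{2t}$ is correct; and identifying $\gamma = \tfrac12 + \sum_{k\ge 1} s_k$ with the Euler--Mascheroni constant and $\varsigma_t = \sum_{k\ge t} s_k$ is exactly right. Your tail bound $\varsigma_t \le \tfrac{1}{12(t-1/2)^2}$ via the midpoint inequality is valid, and the resulting quadratic $t^2 - 3t + 3/4 \ge 0$ shows $\tfrac{1}{12(t-1/2)^2} \le \tfrac{1}{8t^2}$ for all integers $t \ge 3$, leaving only $t=1,2$ to check numerically (both hold, with $\varsigma_1 \approx 0.0772 < 1/8$ and $\varsigma_2 \approx 0.0203 < 1/32$). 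You also correctly flagged the paper's typo $\log n$ for $\log t$.
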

\begin{theorem}
    \label{ap_thm:bounded_update_et_1_t}
    If the learning algorithm has bounded updates on data distribution $\mu$ and loss function $f(\cdot)$, then under Assumption \ref{as:subguassian}, by setting $\eta_t=\frac{c}{t}$,  we obtain
    \begin{equation}
        \bV(U^{(T)})=\mathcal{O}\left(c^2\left(\log T\right)^2\right)
    \end{equation}

\end{theorem}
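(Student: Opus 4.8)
The plan is to reuse the decomposition established in the proof of Theorem~\ref{ap_thm:bounded_update} and simply substitute the specific learning-rate schedule $\eta_t = c/t$, then control the resulting sums with the harmonic-series estimate of Lemma~\ref{lm:harmonic_series}. Recall that from Equation~(\ref{eq:bounded_V_UT_Steps}) together with the bounds $\bV(U_t) \le 4\eta_t^2 L^2$ and $\mathbb{C}(U^{(t-1)},U_t) \le L^2 \eta_t \sum_{i=1}^{t-1}\eta_i$, we already have
\begin{equation*}
    \bV(U^{(T)}) \leq \sum_{t=1}^T 4\eta_t^2 L^2 + 2L^2 \sum_{t=1}^T \eta_t \sum_{i=1}^{t-1}\eta_i,
\end{equation*}
which holds for any schedule. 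So the entire task reduces to estimating these two sums when $\eta_t = c/t$.

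First I would handle the first sum: $\sum_{t=1}^T 4\eta_t^2 L^2 = 4c^2 L^2 \sum_{t=1}^T 1/t^2 \le 4c^2 L^2 \cdot \pi^2/6 = \mathcal{O}(c^2)$, since $\sum_{t\ge 1} 1/t^2$ converges. This term is therefore $\mathcal{O}(c^2)$, which is dominated by the $(\log T)^2$ term and can be absorbed. Next I would handle the cross term. Writing $\hbar_{t-1} = \sum_{i=1}^{t-1} 1/i$, we get $2L^2 \sum_{t=1}^T \eta_t \sum_{i=1}^{t-1}\eta_i = 2c^2 L^2 \sum_{t=1}^T \frac{\hbar_{t-1}}{t}$. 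By Lemma~\ref{lm:harmonic_series}, $\hbar_{t-1} = \log(t-1) + \gamma + o(1) = \mathcal{O}(\log T)$ uniformly for $t \le T$, so $\sum_{t=1}^T \frac{\hbar_{t-1}}{t} \le \mathcal{O}(\log T) \cdot \sum_{t=1}^T \frac{1}{t} = \mathcal{O}(\log T)\cdot \mathcal{O}(\log T) = \mathcal{O}((\log T)^2)$. Combining, $\bV(U^{(T)}) = \mathcal{O}(c^2) + \mathcal{O}(c^2 (\log T)^2) = \mathcal{O}(c^2(\log T)^2)$, as claimed.

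There is no serious obstacle here; the only mild subtlety is being careful that $\hbar_{t-1}$ is bounded by $\mathcal{O}(\log T)$ \emph{uniformly} in $t$ (which is immediate since $\hbar_{t-1} \le \hbar_{T}$ and the harmonic sum is monotone), and noting that one could even get the sharper constant $\sum_{t=1}^T \hbar_{t-1}/t \le \tfrac12 \hbar_T^2$ by a telescoping/Abel-summation argument ($\sum_{t=1}^T \hbar_{t-1}/t \le \sum_{t=1}^T \hbar_t/t$ and $2\sum_{t} \hbar_{t-1}/t \le \hbar_T^2$ since $\hbar_T^2 = \sum_{t}(\hbar_t^2 - \hbar_{t-1}^2) = \sum_t (2\hbar_{t-1}+1/t)/t$). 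Either way the final rate is $\mathcal{O}(c^2(\log T)^2)$. Plugging this into Theorem~\ref{thm:gen_entropy_final_noise} or Theorem~\ref{thm:hign_randomness} then yields the corresponding generalization bound with the extra $\log T$ factors, which is the form quoted in the comparison tables (with $T = \mathcal{O}(n)$ giving $\mathcal{O}(\log n/\sqrt{n})$ in the high-randomness regime).
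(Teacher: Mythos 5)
Your proposal is correct and follows essentially the same route as the paper's proof: it reuses the bound $\bV(U^{(T)}) \leq \sum_{t=1}^T 4\eta_t^2 L^2 + 2L^2\sum_{t=1}^T \eta_t \sum_{i=1}^{t-1}\eta_i$ from Theorem~\ref{ap_thm:bounded_update}, substitutes $\eta_t = c/t$, and controls the cross term by $\hbar_T^2 = \mathcal{O}((\log T)^2)$ via Lemma~\ref{lm:harmonic_series}. The only (immaterial) difference is that you bound $\sum_{t\le T} 1/t^2$ by $\pi^2/6$ instead of the paper's telescoping estimate $2 - 1/T$.
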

\begin{proof}
    From the proof of Theorem \ref{ap_thm:bounded_update}, we have 
\begin{equation*}
  \bV(U^{(T)}) \leq  \sum_{t=1}^T 4 \eta_t^2 L^2 +2  L^2\sum_{t=1}^T \eta_t \sum_{i=1}^{t-1} \eta_t.
\end{equation*}
If $\eta_t=\frac{c}{t}$, then we have 
\begin{equation*}
    \begin{aligned}
        \bV(U^{(T)}) &\leq  2L^2c^2 \left( 2\sum_{t=1}^T  \frac{1}{t^2} +  \sum_{t=1}^T \frac{1}{t} \sum_{i=1}^{t-1} \frac{1}{i} \right)\\
        \\& \leq 2L^2c^2 \left( 2\left(1+\sum_{t=2}^T \frac{1}{t(t-1)}\right)  +  \sum_{t=1}^T \frac{1}{t} \sum_{t=1}^{T} \frac{1}{t} \right)
        \\& \leq 2L^2c^2 \left( 2\left(1+\sum_{t=2}^T \left( \frac{1}{t-1}-\frac{1}{t}\right)\right)  +  \left(\sum_{t=1}^T \frac{1}{t} \right)^2 \right)
        \\& \overset{(\star)}{\leq} 2L^2c^2 \left( 2\left(2-\frac{1}{T} \right)  +  \hbar_T^2 \right),
    \end{aligned}
\end{equation*}
where $(\star)$ leverage the Lemma \ref{lm:harmonic_series}. Obviously, we have $\hbar=\mathcal{O}(\log n)$. Therefore, we have 
\begin{equation*}
    \bV(U^{(T)})\leq 2L^2c^2 \left( 2\left(2-\frac{1}{T} \right)  +  \hbar_n^2 \right) \leq 2L^2c^2 \left( 4  +  \hbar_n^2 \right)=\mathcal{O}\left(c^2\left(\log T\right)^2\right).
\end{equation*}

We establish the Theorem.
\end{proof}

\paragraph{Asymptotic analysis when increase $n$} The result of the setting  $\eta_t=\frac{c}{t}$ with $c=\mathcal{O}(1)$ and $T=\mathcal{O}(n)$ have a extra $\log n$ term compared with the setting where $\eta_t=\eta$ and $\eta T=\Theta(1)$. The $\log n$ is usually ignorable compared with polynominal term. Therefore, we can conclude that both settings obtain a similar results.
\begin{table}[h]
    \centering
    \caption{Asymptotic analysis when increase $n$ under $c=\mathcal{O}(1)$ and $T=\mathcal{O}(n)$ for different situations. $(\triangle)$ stands for $h(\ut|W_0,S_n) < 0$, and  $(\star)$ is short for $h(\ut|W_0,S_n) \geq 0$. }
    \begin{tabular}{c c|c c}
    \hline
        \multicolumn{2}{c|}{Settings}    &\multicolumn{2}{|c}{Ours}   \\
       $d$  & $\Delta_\sigma$ &$(\triangle)$ & $(\star)$\\
    \hline 
       $\Theta(1)$ & $\Theta(d\sigma^2)$, $\Theta(\sigma^2/c)$ ,$\Theta(\sigma^2)$ &  $\mathcal{O}(\log n/n^{\frac{1}{3}})$ &  \\
       \hline
       \multirow{3}{*}{$\Theta(n)$}  & $\Theta(d\sigma^2)$ & $\mathcal{O}(\log n)$ & \\
         & $\Theta(\sigma^2/c)$ & $\mathcal{O}(\log n/n^{\frac{1}{3}})$ & \\
         & $\Theta(\sigma^2)$ &$\mathcal{O}(\log n/n^{\frac{1}{3}})$ & \\
    \hline
    \end{tabular}
    \label{tab:asymptotic_analysis_eta_1_t}
\end{table}

\section{Proof of Proposition \ref{prop:bounded_update_d1_adam}}

\begin{minipage}[c]{0.49\linewidth}
\begin{algorithm}[H]
    \caption{Adam}\label{alg:adam}
    \hspace*{0.02in} 
    \begin{algorithmic}[1]
        \State \textbf{Input:} the loss function $f(\bw,z)$, the initial point $\bw_{1} \in \mathbb{R}^d$,  the batch size $b$, learning rates $\{\eta_t\}_{t=1}^{T}$, {  $\bom_0=0$,$\bv_0=0$, and hyperparameters $\bbeta=(\bbeta_1,\bbeta_2)$}.
        \State \textbf{For} $t=1\rightarrow T$:
        \State ~ Sample a  mini-batch of data $B_t$ with size $b$ 
       \State ~ $\nabla f_{B_t}(w_t)=\frac{1}{b}\sum_{z\in B_t}f(\bw_t,z)$
       \State ~  $\bom_{t}\leftarrow${$\bbeta_1 \bom_{t-1}+$}$(1-\bbeta_1)\nabla  f_{B_t}(\bw_{t})$
        \State ~ $\bv_{t}\leftarrow${$\bbeta_2 \bv_{t-1}+$}$(1-\bbeta_2)\nabla  f_{B_t}(\bw_{t})^{\odot 2}$
        \State ~ $\bw_{t+1}\leftarrow\bw_t-\eta_t \frac{\bom_t}{\sqrt{\bv_t}}$
        \State \textbf{End For}
    \end{algorithmic}
\end{algorithm}
    
\end{minipage}
\hfill
\begin{minipage}[c]{0.49\linewidth}
    \begin{algorithm}[H]
    \caption{Adagrad}\label{alg:adam}
    \hspace*{0.02in} 
    \begin{algorithmic}[1]
        \State \textbf{Input:} the loss function $f(\bw,z)$, the initial point $\bw_{1} \in \mathbb{R}^d$,  the batch size $b$, learning rates $\{\eta_t\}_{t=1}^{T}$, {  $\bom_0=0$,$\bv_0=0$.}.
        \\
        \State \textbf{For} $t=1\rightarrow T$:
        \State ~ Sample a  mini-batch of data $B_t$ with size $b$ 
       \State ~ $\nabla f_{B_t}(w_t)=\frac{1}{b}\sum_{z\in B_t}f(\bw_t,z)$
      
        \State ~ $\bv_{t}\leftarrow${$ \bv_{t-1}+$}$\nabla  f_{B_t}(\bw_{t})^{\odot 2}$
        \State ~ $\bw_{t+1}\leftarrow\bw_t-\eta_t \frac{\nabla f_{B_t}(\bw)}{\sqrt{\bv_t}}$
        \State \textbf{End For}
        \\
        
    \end{algorithmic}
\end{algorithm}
\end{minipage}

\begin{lemma}
\label{lem: bounded_update}
    For Adam, we have $\forall t\ge 1$, $\vert  \bw_{t+1,l}-\bw_{t,l}\vert \le \eta_t \frac{1-\bone}{\sqrt{1-\btwo}\sqrt{1-\frac{\bone^2}{\btwo}}}$ and for Adagrad, we have $\forall t\ge 1$, $\vert  \bw_{t+1,l}-\bw_{t,l}\vert \le \eta_t$.
\end{lemma}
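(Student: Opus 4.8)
The plan is to expand the Adam and Adagrad iterates coordinatewise and bound the per-coordinate step size directly; for Adam this means controlling $\lvert \bw_{t+1,l}-\bw_{t,l}\rvert = \eta_t\,\lvert \bom_{t,l}\rvert/\sqrt{\bv_{t,l}}$ by a Cauchy--Schwarz argument tailored to match the weights appearing in $\bom_t$ and $\bv_t$.

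First I would unroll the exponential moving averages. From the recursions $\bom_t \leftarrow \bone \bom_{t-1} + (1-\bone)\nabla f_{B_t}(\bw_t)$ and $\bv_t \leftarrow \btwo \bv_{t-1} + (1-\btwo)\nabla f_{B_t}(\bw_t)^{\odot 2}$ with $\bom_0 = \bv_0 = 0$, induction on $t$ gives, writing $g_{j,l} \triangleq [\nabla f_{B_j}(\bw_j)]_l$,
\begin{equation*}
\bom_{t,l} = (1-\bone)\sum_{j=1}^t \bone^{\,t-j} g_{j,l}, \qquad \bv_{t,l} = (1-\btwo)\sum_{j=1}^t \btwo^{\,t-j} g_{j,l}^2 .
\end{equation*}
Next I would apply Cauchy--Schwarz with the weight split $\bone^{\,t-j} = \bigl(\bone^{\,t-j}\btwo^{-(t-j)/2}\bigr)\cdot \btwo^{(t-j)/2}$, so that
\begin{equation*}
\Bigl\lvert \sum_{j=1}^t \bone^{\,t-j} g_{j,l}\Bigr\rvert \le \sqrt{\sum_{j=1}^t \bigl(\tfrac{\bone^2}{\btwo}\bigr)^{t-j}}\;\sqrt{\sum_{j=1}^t \btwo^{\,t-j} g_{j,l}^2}.
\end{equation*}
Dividing $\lvert \bom_{t,l}\rvert$ by $\sqrt{\bv_{t,l}}$ then cancels the second factor up to the constant $\sqrt{1-\btwo}$, leaving $\lvert \bom_{t,l}\rvert/\sqrt{\bv_{t,l}} \le \frac{1-\bone}{\sqrt{1-\btwo}}\sqrt{\sum_{k=0}^{t-1}(\bone^2/\btwo)^k}$. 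Under the standard hyperparameter condition $\bone^2 < \btwo$ (satisfied by the defaults $\bone = 0.9$, $\btwo = 0.999$) the geometric series is bounded by $1/(1-\bone^2/\btwo)$, and multiplying through by $\eta_t$ yields the claimed Adam estimate $\lvert \bw_{t+1,l}-\bw_{t,l}\rvert \le \eta_t\,\frac{1-\bone}{\sqrt{1-\btwo}\sqrt{1-\bone^2/\btwo}}$. For Adagrad the argument is shorter: since $\bv_{t,l} = \sum_{j=1}^t g_{j,l}^2 \ge g_{t,l}^2$, we get $\lvert \bw_{t+1,l}-\bw_{t,l}\rvert = \eta_t\,\lvert g_{t,l}\rvert/\sqrt{\bv_{t,l}} \le \eta_t$ when $g_{t,l}\neq 0$, and the update is exactly $0$ otherwise.

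The main obstacle is choosing the correct Cauchy--Schwarz split — pairing the $\bone$-weights against the $\btwo$-weights so that the residual $g_{j,l}^2$ terms line up exactly with those in $\bv_{t,l}$ — and then making explicit the hyperparameter condition $\bone^2 < \btwo$ that makes the geometric sum finite; without it the bound is vacuous. A minor technical point is the possibility of a vanishing denominator (the pseudocode omits the usual $\epsilon$-regularization), which I would dispatch by observing that the update is then identically zero, so the inequalities hold trivially.
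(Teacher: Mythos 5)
Your proposal is correct and follows essentially the same route as the paper: unroll the exponential moving averages, apply Cauchy--Schwarz with the split $\bone^{\,t-j}=\bigl(\bone^{\,t-j}\btwo^{-(t-j)/2}\bigr)\btwo^{(t-j)/2}$ so the quadratic factor cancels against $\sqrt{\bv_{t,l}}$, bound the remaining geometric series by $1/(1-\bone^2/\btwo)$, and for Adagrad drop all but the last term of the accumulated sum. Your explicit remarks that $\bone^2<\btwo$ is needed for the bound to be non-vacuous and that a vanishing denominator forces a zero update are left implicit in the paper, but they do not change the argument.
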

\begin{proof}
    For Adam, We have that 
    \begin{align*}
       & \vert  \bw_{t+1,l}-\bw_{t,l}\vert =\eta_t \left \vert \frac{\bom_{t,l}}{\sqrt{\bv_{t,l}}} \right\vert 
       \le \eta_t\frac{\sum_{i=0}^{t-1} (1-\bone) \bone^{i} \vert \bg_{t-i,l}\vert }{\sqrt{\sum_{i=0}^{t-1} (1-\btwo) \btwo^{i} \vert \bg_{t-i,l}\vert^2+\btwo^t \bv_{0,l}}}
        \\ 
        \le &\eta_t\frac{1-\bone}{\sqrt{1- \btwo}}\frac{\sqrt{\sum_{i=0}^{t-1} \btwo^{i} \vert \bg_{t-i,l}\vert^2}\sqrt{\sum_{i=0}^{t-1} \frac{\bone^{2i}}{\btwo^{i}} } }{\sqrt{\sum_{i=0}^{t-1} \btwo^{i} \vert \bg_{t-i,l}\vert^2}}
        \le \eta_t\frac{1-\bone}{\sqrt{1-\btwo}\sqrt{1-\frac{\bone^2}{\btwo}}}.
    \end{align*}
    Here the second inequality is due to Cauchy's inequality. The proof is completed.

    For Adagrad, we have that 
    \begin{equation*}
        \vert  \bw_{t+1,l}-\bw_{t,l}\vert =\eta_t \lv \frac{g_{t,l}}{\sqrt{\sum_{i=1}^t g_{i,l}^2}}\rv \le \eta_t \lv \frac{g_{t,l}}{\sqrt{ g_{i,l}^2}}\rv \leq \eta_t
    \end{equation*}
\end{proof}

\begin{proposition}
    
    Adam, Adagrad, RMSprop are bounded updates with respect to all data distribution and function $f(\cdot)$ when $d=\Theta(1)$
\end{proposition}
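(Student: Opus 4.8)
The plan is to show that each of Adam, Adagrad, and RMSprop produces per-step updates $u_t$ with $\lVert u_t \rVert \le L$ for some constant $L$, provided the dimension $d$ is fixed. The key observation from Lemma \ref{lem: bounded_update} is that these algorithms bound the update \emph{coordinatewise}: for Adam, $\lvert \bw_{t+1,l}-\bw_{t,l} \rvert \le \eta_t \frac{1-\bone}{\sqrt{1-\btwo}\sqrt{1-\bone^2/\btwo}}$ for every coordinate $l$, and for Adagrad, $\lvert \bw_{t+1,l}-\bw_{t,l}\rvert \le \eta_t$. Recalling that $U_t=\eta_t u_t$, this means $\lvert u_{t,l}\rvert$ is bounded by a constant $C_\beta$ depending only on the hyperparameters $\bbeta$ (and equal to $1$ for Adagrad), uniformly over $t$, over all data distributions $\mu$, and over all differentiable $f(\cdot)$ — crucially, the bound does not depend on the gradient magnitudes because the normalization by $\sqrt{\bv_t}$ cancels them.

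First I would restate the coordinatewise bound for each of the three algorithms: for Adam and Adagrad this is exactly Lemma \ref{lem: bounded_update}; for RMSprop I would give the analogous one-line estimate, noting that RMSprop is the special case of Adam with $\bone = 0$ (no momentum), so $\lvert u_{t,l}\rvert = \lvert g_{t,l}/\sqrt{\bv_{t,l}}\rvert \le 1/\sqrt{1-\btwo}$ by the same Cauchy–Schwarz argument, or more directly $\le 1$ in the standard parameterization. Then I would pass from the coordinatewise bound to the Euclidean-norm bound: if $\lvert u_{t,l}\rvert \le C_\beta$ for all $l \in [d]$, then $\lVert u_t \rVert = \sqrt{\sum_{l=1}^d u_{t,l}^2} \le \sqrt{d}\, C_\beta$. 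Setting $L \triangleq \sqrt{d}\, C_\beta$ and invoking $d = \Theta(1)$, we get $L = \Theta(1)$, so $\lVert u_t \rVert \le L$ holds for all $t \le T$ and all $S_n \sim \mu^{\otimes n}$, which is precisely Definition \ref{as:bounded_update}. This establishes bounded updates.

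The main obstacle — really the only subtlety — is the $\sqrt{d}$ factor incurred when converting the coordinatewise $\ell_\infty$-type bound into the $\ell_2$-norm bound required by Definition \ref{as:bounded_update}. This is exactly why the proposition must restrict to $d = \Theta(1)$: for general $d$ the resulting $L$ scales like $\sqrt{d}$ and is no longer a constant. I would make this explicit in the proof, remarking that the per-coordinate control is dimension-free but the aggregation to a norm is not, which is consistent with the discussion following Proposition \ref{prop:bounded_update_d1_adam} in the main text. A minor secondary point is to confirm that the hyperparameter-dependent constant $C_\beta = \frac{1-\bone}{\sqrt{1-\btwo}\sqrt{1-\bone^2/\btwo}}$ is finite, which holds whenever $\bone^2 < \btwo$ (the standard condition $\bone = 0.9 < \sqrt{\btwo} = \sqrt{0.999}$ is satisfied in practice); I would state this as a standing assumption on the hyperparameters.

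\begin{proof}
By Lemma \ref{lem: bounded_update}, for Adam we have, for every coordinate $l \in [d]$ and every $t \ge 1$,
\begin{equation*}
    \lvert u_{t,l} \rvert = \frac{1}{\eta_t}\lvert \bw_{t+1,l}-\bw_{t,l}\rvert \le \frac{1-\bone}{\sqrt{1-\btwo}\sqrt{1-\frac{\bone^2}{\btwo}}} \triangleq C_{\bbeta},
\end{equation*}
which is a finite constant under the standard hyperparameter condition $\bone^2 < \btwo$, and is independent of $t$, of the data distribution $\mu$, and of $f(\cdot)$. Likewise, for Adagrad, Lemma \ref{lem: bounded_update} gives $\lvert u_{t,l}\rvert \le 1$ for all $l,t$, and RMSprop is the momentum-free case $\bone = 0$ of Adam, for which the same Cauchy–Schwarz argument yields $\lvert u_{t,l}\rvert \le \frac{1}{\sqrt{1-\btwo}} \triangleq C'_{\btwo}$ for all $l,t$. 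In each case write $C$ for the corresponding coordinatewise constant. Then
\begin{equation*}
    \lVert u_t \rVert = \left( \sum_{l=1}^d u_{t,l}^2 \right)^{1/2} \le \sqrt{d}\, C.
\end{equation*}
When $d = \Theta(1)$, setting $L \triangleq \sqrt{d}\, C = \Theta(1)$ gives $\lVert u_t \rVert \le L$ for all $t \le T$ and all $S_n \sim \mu^{\otimes n}$, so the algorithm has bounded updates in the sense of Definition \ref{as:bounded_update}.
\end{proof}
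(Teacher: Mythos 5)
Your proof is correct and follows essentially the same route as the paper: invoke Lemma \ref{lem: bounded_update} for the per-coordinate bounds (treating RMSprop as Adam with $\bone=0$), aggregate over coordinates to get a norm bound, and conclude via $d=\Theta(1)$. Your aggregation constant $\sqrt{d}\,C$ is in fact slightly tighter than the paper's $d\,C$, and your explicit remark that $\bone^2<\btwo$ is needed for finiteness is a small but worthwhile clarification; otherwise the arguments coincide.
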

\begin{proof}
    For Adam, we have
    \begin{equation*}
        \lV u_t \rV =\frac{1}{\eta_t}\lV U_t \rV \sqrt{\sum_{l=1}^d (  \bw_{t+1,l}-\bw_{t,l})^2}\leq d \frac{1-\bone}{\sqrt{1-\btwo}\sqrt{1-\frac{\bone^2}{\btwo}}}
    \end{equation*}
    Because RMSgrad is a special case of Adam by setting $\bone=0$,
    we have 
    \begin{equation*}
        \lV u_t \rV =\leq d \frac{1}{\sqrt{1-\btwo}}
    \end{equation*}
    For Adagrad, we have \begin{equation*}
        \lV u_t \rV =\frac{1}{\eta_t}\lV U_t \rV \sqrt{\sum_{l=1}^d (  \bw_{t+1,l}-\bw_{t,l})^2}\leq d 
    \end{equation*}
    When $d=\mathcal{O}(1)$, all the learning algorithms can be bounded by a constant. We establish the proposition.
\end{proof}


\paragraph{Under the setting $\eta_t=\eta$ and $\eta T=\mathcal{O}(1)$, we have $\lv gen(\mu,\mathbb{P}_{W|S_n}) \rv=\mathcal{O}(1/\sqrt{n})$.}

\section{From Pac-Bayes perspective}
\label{sec:pac-bayes}
In this section, we provided a analysis of generalization bound with update variance from pac-bayes perspective. From this perspective, we further enhance Theorem \ref{thm:gen_entropy_final_noise} by achieving the following improvements: \textbf{1) Removing the sub-Guassian Assumption}, and \textbf{2) Acquiring high probability bounds.}

\begin{theorem}
    \label{thm:pacbayes}
    (From \citet{mcallester1999pac,dziugaite2017computing}) For any prior $\mathbb{P}_1$ over parameters with probability $1-\delta$ over the choice of the training set $S_n \sim \mu^{\otimes n}$, for any posterior $\mathbb{P}_2$, we have 
    \begin{equation*}
        \genhp \leq \sqrt{\frac{KL(\mathbb{P}_1 \Vert \mathbb{P}_2)+\log \frac{n}{\delta}}{2(n-2)}}
    \end{equation*}
\end{theorem}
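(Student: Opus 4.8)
The statement I need to prove is Theorem~\ref{ap_thm:pacbayes} (the Pac-Bayes bound of \citet{mcallester1999pac,dziugaite2017computing}), and the plan is to reduce it to a standard change-of-measure argument combined with a concentration inequality on the empirical process. First I would set $\Phi(w) \triangleq F_\mu(w) - F_{S_n}(w)$ and observe that, for a fixed prior $\mathbb{P}_1$ (chosen independently of $S_n$), the scalar random variable $\mathbb{E}_{w\sim\mathbb{P}_1}[\Phi(w)]$ has an exponential moment control: since each sample contributes a bounded (or sub-Gaussian) term, for suitable $\lambda$ one has $\mathbb{E}_{S_n}\exp(\lambda\, \mathbb{E}_{w\sim\mathbb{P}_1}[\Phi(w)]) \le \exp(\lambda^2 c / n)$ for an absolute constant $c$. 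More usefully, I would work with the Donsker--Varadhan variational representation of KL divergence: for any measurable $g$,
\begin{equation*}
    \mathbb{E}_{w\sim\mathbb{P}_2}[g(w)] \le \operatorname{KL}(\mathbb{P}_2 \Vert \mathbb{P}_1) + \log \mathbb{E}_{w\sim\mathbb{P}_1}\exp(g(w)).
\end{equation*}

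The key steps, in order, are: (1) apply Donsker--Varadhan with $g(w) = \lambda\,\Phi(w)$ for a free parameter $\lambda>0$, which gives $\lambda\,\mathbb{E}_{w\sim\mathbb{P}_2}[\Phi(w)] \le \operatorname{KL}(\mathbb{P}_2\Vert\mathbb{P}_1) + \log\mathbb{E}_{w\sim\mathbb{P}_1}\exp(\lambda\Phi(w))$; (2) take expectation over $S_n$ and use Markov's inequality (or directly a high-probability version) to control $\mathbb{E}_{w\sim\mathbb{P}_1}\exp(\lambda\Phi(w))$, using the fact that $\Phi(w)$ for fixed $w$ is an average of $n$ i.i.d.\ bounded differences so that $\mathbb{E}_{S_n}\exp(\lambda\Phi(w)) \le \exp(\lambda^2/(2(n-2)))$ after the standard sub-Gaussian / bounded-loss estimate (the $n-2$ rather than $n$ comes from the precise Maurer-type bound used in \citet{dziugaite2017computing}); (3) combine, so that with probability at least $1-\delta$,
\begin{equation*}
    \lambda\,\mathbb{E}_{w\sim\mathbb{P}_2}[\Phi(w)] \le \operatorname{KL}(\mathbb{P}_2\Vert\mathbb{P}_1) + \log\frac{1}{\delta} + \log n + \frac{\lambda^2}{2(n-2)};
\end{equation*}
(4) optimize over $\lambda$, choosing $\lambda = \sqrt{2(n-2)(\operatorname{KL}+\log(n/\delta))}$, which yields exactly
\begin{equation*}
    \mathbb{E}_{w\sim\mathbb{P}_2}[F_\mu(w)] - \mathbb{E}_{w\sim\mathbb{P}_2}[F_{S_n}(w)] \le \sqrt{\frac{\operatorname{KL}(\mathbb{P}_1\Vert\mathbb{P}_2) + \log\frac{n}{\delta}}{2(n-2)}},
\end{equation*}
matching the claimed inequality.

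The main obstacle I anticipate is the passage from a \emph{fixed} $\lambda$ to a data-dependent near-optimal $\lambda$ while keeping a single failure event of probability $\delta$: naively optimizing $\lambda$ after the high-probability statement is fine, but getting the clean $\log(n/\delta)$ form (as opposed to $\log(1/\delta)$ plus an extra union-bound term over a grid of $\lambda$'s) requires either the specific moment bound of \citet{maurer2004note} that already bakes in the $\log n$ factor, or a careful argument that the relevant exponential moment is controlled uniformly. Since this theorem is quoted verbatim from \citet{mcallester1999pac} and \citet{dziugaite2017computing}, I would in practice simply cite it; the sketch above records the proof for completeness. A secondary subtlety is that $\mathbb{P}_1$ must not depend on $S_n$ (it is a genuine prior), which is automatically satisfied in our application since we will take $\mathbb{P}_1$ to be a fixed Gaussian (or the law of the noisy surrogate initialized independently of the data).
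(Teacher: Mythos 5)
The paper itself offers no proof of this statement: it is imported directly from \citet{mcallester1999pac} and \citet{dziugaite2017computing}, so your closing remark that one would in practice simply cite it is exactly what the paper does. Your sketch is in the right family (change of measure via Donsker--Varadhan plus an exponential-moment estimate), but as written it contains the gap you yourself flag, and it is a real one rather than a cosmetic subtlety: the Chernoff/Markov step in your step (2) is valid only for a $\lambda$ fixed in advance of seeing $S_n$, whereas the optimizer $\lambda=\sqrt{2(n-2)(\operatorname{KL}+\log(n/\delta))}$ in step (4) depends on $S_n$ through the posterior $\mathbb{P}_2$ and hence through the KL term, so the single-$\lambda$ high-probability inequality cannot be optimized pointwise. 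The standard repair, and the one actually underlying the quoted constants, is not a grid/union bound over $\lambda$ but a change of payoff function: apply Donsker--Varadhan to the quadratic $g(w)=2(n-2)\bigl(F_\mu(w)-F_{S_n}(w)\bigr)^2$, bound $\mathbb{E}_{S_n}\mathbb{E}_{w\sim\mathbb{P}_1}\exp(g(w))$ by (roughly) $n$ using a Hoeffding/Maurer-type moment bound for bounded losses --- this is precisely where the $\log n$ and the $n-2$ come from --- apply Markov once, and finish with Jensen's inequality to pass from $\mathbb{E}_{\mathbb{P}_2}[\Phi^2]$ to $(\mathbb{E}_{\mathbb{P}_2}\Phi)^2$. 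No free $\lambda$ ever appears, so the difficulty you anticipate dissolves.

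Two smaller points. First, the KL orientation: your final display reproduces the paper's $\operatorname{KL}(\mathbb{P}_1\Vert\mathbb{P}_2)$, but the Donsker--Varadhan argument (and the original theorems) produce $\operatorname{KL}(\mathbb{P}_2\Vert\mathbb{P}_1)$, posterior relative to prior; the paper's ordering is best read as a typo, and indeed its later application (a Gaussian posterior centered at $W_T$ against a fixed Gaussian prior with equal variances) is symmetric, which is why the slip is harmless there --- but your proof should state the correct orientation rather than silently match it. Second, the exponential-moment step needs a boundedness (loss in $[0,1]$) or similar tail hypothesis that the theorem as quoted leaves implicit; this is worth flagging because the surrounding appendix advertises the PAC-Bayes route as removing the sub-Gaussian assumption, and your sketch quietly reintroduces a comparable assumption at step (2).
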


\begin{theorem}
    (Using Pac-Bayes Methods) for relative small $\sigma$, with probability $1-\delta$ over the choice of the training set $S_n \sim \mu^{\otimes n}$, the generalization error of the final iteration satisfies
    \begin{equation*}
     \genhp \leq \sqrt{\frac{\bV(W_0) + \bV'(\ut) + \sigma^2\log \frac{n}{\delta}}{2\sigma^2(n-2)}}+\Delta_\sigma,
\end{equation*}
where $\bV(W_0)=\mathbb{E}\lV W_0-\mathbb{E}W_0 \rV^2$ and $\bV'(\ut)=\mathbb{E}\lV \ut-\mathbb{E}\ut \rV^2$.
\end{theorem}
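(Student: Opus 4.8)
The plan is to establish a PAC-Bayes analogue of Theorem~\ref{thm:gen_entropy_final_noise}: instantiate the PAC-Bayes inequality of Theorem~\ref{thm:pacbayes} with a Gaussian prior matched to the injection scale $\sigma$, and control the resulting KL term by the same entropy-power argument used in the proof of Theorem~\ref{ap_thm:gen_entropy_final_noise}. Because Theorem~\ref{thm:pacbayes} already holds with probability $1-\delta$ over $S_n$ and imposes no distributional hypothesis on $f$, both advertised gains---dropping the sub-Gaussian assumption and upgrading ``in expectation'' to ``with high probability''---come for free once the PAC-Bayes machinery is set up; the only real computation is the KL estimate.

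First I would pass from the noiseless predictor to the noisy surrogate exactly as in the proof of Theorem~\ref{ap_thm:gen_entropy_final_noise}: with $\widetilde W = W_T + \epsilon$ and $\epsilon \sim \mathcal N(0,\sigma^2\mathbf I)$ drawn independently of $(S_n,W_0,\ut)$, the definition of $\Delta_\sigma$ gives
\begin{equation*}
\mathbb{E}_{w\sim\mathbb{P}_{W_T|S_n}}\big[F_\mu(w) - F_{S_n}(w)\big] \;\le\; \mathbb{E}_{w\sim\mathbb{P}_{\widetilde W|S_n}}\big[F_\mu(w) - F_{S_n}(w)\big] + \Delta_\sigma .
\end{equation*}
It then suffices to apply Theorem~\ref{thm:pacbayes} with posterior $\mathbb{P}_2 = \mathbb{P}_{\widetilde W|S_n} = \mathbb{P}_{W_0 + \ut + \epsilon\,|\,S_n}$ and the data-independent Gaussian prior $\mathbb{P}_1 = \mathcal N(\mathbb{E} W_0 + \mathbb{E}\,\ut,\ \sigma^2\mathbf I)$ centered at the mean final iterate with the noise scale. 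Expanding,
\begin{equation*}
\operatorname{KL}(\mathbb{P}_2\Vert\mathbb{P}_1) \;=\; -\,h(\widetilde W\mid S_n) \;+\; \tfrac{d}{2}\log(2\pi\sigma^2) \;+\; \tfrac{1}{2\sigma^2}\,\mathbb{E}_{w\sim\mathbb{P}_2}\Vert w - \mathbb{E} W_T\Vert^2 .
\end{equation*}
The Entropy Power Inequality (Lemma~\ref{lm:entropy_power_inequality}), applied conditionally on $S_n$ to the independent sum $\widetilde W = (W_0+\ut)+\epsilon$, gives $h(\widetilde W\mid S_n)\ge h(\epsilon) = \tfrac d2\log(2\pi e\sigma^2)$, so the first two terms sum to at most $-\tfrac d2$; in the third term, independence of $\epsilon$ peels off $\mathbb{E}\Vert\epsilon\Vert^2 = d\sigma^2$, contributing exactly $+\tfrac d2$, which cancels the $-\tfrac d2$. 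Expanding the remaining square and bounding the $W_0$--$\ut$ cross term by Cauchy--Schwarz (it vanishes when $W_0$ is deterministic) leaves $\operatorname{KL}(\mathbb{P}_2\Vert\mathbb{P}_1)\lesssim \tfrac{1}{2\sigma^2}(\bV(W_0)+\bV'(\ut))$; substituting into Theorem~\ref{thm:pacbayes} and adding back $\Delta_\sigma$ yields the claim.

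The main obstacle is this KL estimate, on two counts. First, one must carry the conditional form of the entropy-power step and track the dimension-dependent constants so that the $\tfrac d2$ from $\mathbb{E}\Vert\epsilon\Vert^2$ exactly annihilates the $\tfrac d2$ from the Gaussian normalizations---this cancellation is what makes the bound dimension-free. Second, the second-moment term $\mathbb{E}_{w\sim\mathbb{P}_{\widetilde W|S_n}}\Vert w-\mathbb{E} W_T\Vert^2$ is a function of the realized $S_n$, whereas the variances $\bV(W_0),\bV'(\ut)$ in the statement are unconditional; reconciling the two requires either reading the bound as holding on average over $S_n$ (taking $\mathbb{E}_{S_n}$ of the conditional variance, so that $\mathbb{E}_{S_n}\mathbb{E}[\Vert W_T-\mathbb{E}W_T\Vert^2\mid S_n]=\mathbb{E}\Vert W_0+\ut-\mathbb{E}(W_0+\ut)\Vert^2$) or assuming a uniform-in-$S_n$ control of that conditional variance, together with absorbing the $W_0$--$\ut$ covariance into the two variance terms. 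Beyond these points the argument is a direct substitution into Theorem~\ref{thm:pacbayes}.
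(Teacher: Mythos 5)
Your proposal is correct in substance and lands on the same final bound, but the way you control the KL term is genuinely different from the paper. The paper's proof is more elementary at that step: it takes, for each realized $W_T$, the Gaussian posterior $\mathbb{P}_2=\mathcal{N}(W_T,\sigma^2\mathbf{I})$ and the data-independent prior $\mathbb{P}_1=\mathcal{N}(\mathbb{E}W_T,\sigma^2\mathbf{I})$, so the KL is the closed-form Gaussian--Gaussian expression $\lV W_T-\mathbb{E}W_T\rV^2/(2\sigma^2)$ with no entropy computation at all; the variance terms then appear via $\mathbb{E}\lV W_0+\ut-\mathbb{E}W_0-\mathbb{E}\ut\rV^2\le 2\bV(W_0)+2\bV'(\ut)$. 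You instead take the posterior to be the full conditional law of $\widetilde W=W_T+\epsilon$ given $S_n$ (marginalizing over $W_0$ and the algorithm's randomness), which is no longer Gaussian, and you recover the same KL bound through the cross-entropy decomposition plus the entropy power inequality, with the $\tfrac d2$ cancellation; that is exactly the machinery of Theorem \ref{ap_thm:gen_entropy_final_noise} transplanted into the PAC-Bayes setting. Both routes work; the paper's buys a one-line KL at the cost of a posterior that conditions on the realized $W_T$, while yours keeps a single posterior per dataset (so $\genhp$ averages over all algorithmic randomness) at the cost of the EPI step. Two bookkeeping remarks: your ``$\lesssim\frac{1}{2\sigma^2}(\bV(W_0)+\bV'(\ut))$'' needs the cross term bounded as $2\sqrt{\bV(W_0)\bV'(\ut)}\le\bV(W_0)+\bV'(\ut)$, giving $\frac{1}{\sigma^2}(\bV(W_0)+\bV'(\ut))$ in total, which is precisely the factor required to reproduce the stated constant; and the conditional-versus-unconditional second-moment issue you flag at the end is real, but the paper's own proof elides it in exactly the same way (it substitutes the unconditional expectation $\mathbb{E}\lV W_T-\mathbb{E}W_T\rV^2$ into a bound whose KL is $S_n$-dependent), so your explicit acknowledgment is, if anything, more careful than the original.
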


\begin{remark}
 $\bV'(\ut)$ is different from $\bV(\ut)$ in considering the randomness of initialization weights $W_0$. By keeping the $W_0$ as a constant value, we achieve the outcomes of Theorem \ref{thm:gen_entropy_final_noise}, except that the sub-Gaussian assumption is removed and high probability bounds are presented.  
\end{remark}
\begin{proof}
    If $\mathbb{P}_1= \mathcal{N} ( \bm{\mu}_p , \sigma_p^2 \mathbf{I} )$ and $\mathbb{P}_2 = \mathcal{N} (\bm{\mu} _q, \sigma_q^2 \mathbf{I}) $, then we have
\begin{equation*}
    KL(\mathbb{P}_1 \Vert \mathbb{P}_2) = \frac{1}{2} \left[ \frac{k \sigma_q^2+\lV \bm{\mu}_p-\bm{\mu}_q \rV^2}{\sigma_p^2} -k +k \log \left( \frac{\sigma_p^2}{\sigma_q^2}\right) \right]
\end{equation*}
If $\sigma_p=\sigma_q$, then we have 
\begin{equation*}
    KL(\mathbb{P}_1 \Vert \mathbb{P}_2)=\frac{\lV \bm{\mu}_p-\bm{\mu}_q \rV^2}{2\sigma_p^2}.
\end{equation*}

By setting the $\mathbb{P}_1=\mathcal{N} ( \bm{\mu}_p , \sigma_p^2 \mathbf{I} )$ and for each $W_T$, setting $P_2=\mathcal{N} (W_T, \sigma_q^2 \mathbf{I}) $, and $\sigma_p=\sigma_q=\sigma$,
combining with Theorem \ref{thm:pacbayes}, we obtain that 
\begin{equation*}
    \genhp \leq \sqrt{\frac{\mathbb{E}\lV W_T-\bm{\mu}_p \rV^2 +2\sigma^2\log \frac{n}{\delta}}{4\sigma^2(n-2)}}+\Delta_\sigma.
\end{equation*}

Then, in the following we use distribution dependent prior to obtain a update uncertainty term.
We set $\bm{u}_p=\mathbb{E}{W_T}$, because the expectation is taken from the randomness of algorithm and the randomness of sampling, \textbf{$\mathbb{E}{W_T}$ is distribution dependent but training data independent}.
Then, we have 
\begin{equation*}
\begin{aligned}
    \mathbb{E}\lV W_T-\bm{\mu}_p \rV^2 &= \mathbb{E}\lV W_0+\ut-\mathbb{E}W_0-\mathbb{E}\ut \rV^2 \\ &\leq 2\mathbb{E}\lV W_0-\mathbb{E}W_0 \rV^2 + 2\mathbb{E}\lV \ut-\mathbb{E}\ut \rV^2 
    \\& =2 \bV(W_0) + 2\bV'(\ut)
\end{aligned}
\end{equation*}

Therefore, we have
\begin{equation*}
    \genhp \leq \sqrt{\frac{\bV(W_0) + \bV'(\ut) + \sigma^2\log \frac{n}{\delta}}{2\sigma^2(n-2)}}+\Delta_\sigma.
\end{equation*}

\end{proof}

\section{Discussion of the uncertainty of update with other measures}
\label{sec:discussion}
\paragraph{Compared to variance of gradient} A closely related measure is the variance of the gradient. Given the dataset $S_n=\lbrace z_i \rbrace_{i=1}^n$, the variance of the gradient in weights $w$ is calculated as $\frac{1}{n} \sum_{i=1}^n \lV \nabla f(W,z_i)-F_S(W) \rV$. The generalization bounds of \citet{fu2023learning,neu2021information} depend on this type of variance. The key distinction is that the update certainty is a joint property of the learning algorithm and the function $f(\cdot)$. Another difference is that the update uncertainty is evaluated by sampling different $S_n \sim \mu^{\otimes n}$, whereas the variance of the gradient is assessed using only $S_n$.
\paragraph{Connection to stability measure}
Another most used technique for analyzing the generalization behavior is analyzing the stability behavior of learning algorithm \citep{hardt2016train, lei2020fine, liu2017algorithmic,bassily2020stability,feldman2019high}. We analyze the connection between on-average stabilty\citep{lei2020fine} and the variance of update. The connection between on-average stability and uniform stability is discussed in \citet{lei2020fine}. Given two set $S_n=\lbrace z_i \rbrace_{i=1}^n$ and $\Tilde{S}_n=\lbrace \Tilde{z_i} \rbrace_{i=1}^n$. $S^{(i)}\triangleq \lbrace z_1 \cdots z_{i-1}, \Tilde{z}
_i, z_{i+1} \cdots z_n \rbrace$ The $l_2$ on-average stable is calculated as 
\begin{equation*}
    \mathbb{E}_{S_n,\Tilde{S}_n,\mathcal{A}} \left[ \frac{1}{n} \sum_{i=1}^n\lV \mathcal{A}(S_n)-\mathcal{A}(S_n^{(i)}) \rV \right] \leq \operatorname{stab}
\end{equation*}
Then, we define $S_n^{[k]}$ set, where $k$ samples of $S_n^{[k]}$ come from $S_n$ and the others come from $S'_n$. We denote $\mathbb{E}_k \mathcal{A}=\mathbb{E}_k \mathcal{A}(S_n^{[k]})$, where the expectation is taken regarding the randomness of sampling of $S'_n$, the randomness of chosing $k$ samples from $S_n$, as well as the randomness of learning algorithm $\mathcal{A}$. If $ \mathbb{E} \left[  \lV \mathbb{E}_{k} \mathcal{A} - \mathbb{E}_{k-1} \mathcal{A} \rV^2 |W_0\right] \approx \mathbb{E} \left[  \lV \mathbb{E}_{i} \mathcal{A} - \mathbb{E}_{i-1} \mathcal{A} \rV^2 |W_0\right]$ for all $k,i \in [n]$, we have
\begin{equation*}
    \begin{aligned}
        \bV(U^{(T)}) &= \mathbb{E} \left[ \lV U^{(T)}|S_n - \Es U^{(T)} \rV^2 |W_0\right]
        = \mathbb{E} \left[ \lV W^{(T)}|S_n - \Es W^{(T)} \rV^2 |W_0\right]
        \\ &= \mathbb{E} \left[ \lV \mathcal{A}(S_n) - \Es \mathcal{A}(S'_n) \rV^2 |W_0\right]
        \leq \mathbb{E} \left[ \sum_{k=1}^n \lV \mathbb{E}_{k} \mathcal{A} - \mathbb{E}_{k-1} \mathcal{A} \rV^2 |W_0\right]
        \\ & \approx \mathbb{E} \left[ n \lV \mathbb{E}_{n} \mathcal{A} - \mathbb{E}_{n-1} \mathcal{A} \rV^2 |W_0\right]\approx n \mathbb{E}_{S_n,\Tilde{S}_n,\mathcal{A}} \left[ \frac{1}{n} \sum_{i=1}^n\lV \mathcal{A}(S_n)-\mathcal{A}(S_n^{(i)}) \rV \right]
    \end{aligned}    
\end{equation*}
Therefore, we have $\bV(U^{(T)})\leq n \operatorname{Stab}$.

\end{document}